\documentclass{article}

\usepackage{arxiv}

\usepackage[utf8]{inputenc} 
\usepackage[T1]{fontenc}    
\usepackage{hyperref}       
\usepackage{url}            
\usepackage{booktabs}       
\usepackage{amsfonts}       
\usepackage{nicefrac}       
\usepackage{microtype}      
\usepackage{lipsum}		
\usepackage{graphicx}
\usepackage{natbib}
\usepackage{doi}

\usepackage{amsmath}
\usepackage{hyperref}
\usepackage{url}
\usepackage{algorithm}
\usepackage{algpseudocode}
\usepackage{amsthm}
\newtheorem{theorem}{Theorem}
\newtheorem{lemma}{Lemma}
\usepackage{booktabs}
\usepackage{multirow}
\usepackage{xcolor}
\usepackage{graphicx}
\usepackage{tcolorbox}
\tcbuselibrary{breakable}
\newtcolorbox{promptbox}[1]{
  breakable, colback=gray!5, colframe=gray!60,
  coltitle=white, colbacktitle=gray!60,
  title=#1, fonttitle=\bfseries\small,
  fontupper=\ttfamily\small
}

\title{Online Verification of Language Model Responses Under Cost Constraints}

\author{ {Erfan Hajihashemi} \\
	Department of Electrical Engineering \& Computer Science\\
	University of California, Irvine\\
	\texttt{ehajihas@uci.edu} \\
	\And
{Yanning Shen} \\
	Department of Electrical Engineering \& Computer Science
\\
	University of California, Irvine \\
	\texttt{yannings@uci.edu} \\
}

\hypersetup{
pdftitle={A template for the arxiv style},
pdfsubject={q-bio.NC, q-bio.QM},
pdfauthor={David S.~Hippocampus, Elias D.~Striatum},
pdfkeywords={First keyword, Second keyword, More},
}

\begin{document}

\maketitle

\begin{abstract}
As large language models are increasingly deployed for multi-step reasoning, verifying the correctness of their outputs has become essential for maintaining reliability at scale. Verifying the correctness of large language model outputs is often done by querying a costly ground-truth oracle, which is impractical to invoke at every step in an online setting. Prior work addresses this by querying a single weak verifier on every step, and using its score to decide whether the costly strong verifier needs to be queried as well, reserving strong verification for only a small fraction of the steps. However, a single fixed weak verifier may not perform consistently well as the subject matter or difficulty of incoming queries changes over time, and committing to one in advance risks either overly costly or inaccurate verification. We introduce OMVV (Online Multi-Verifier Verification), an algorithm that maintains a pool of $K$ candidate weak verifiers with differing cost and verification performance, and adaptively routes each round's decision to a verifier selected via an online score combiner and an exponential-weights routing policy. OMVV provides a distribution-free, finite-time guarantee on false-accept and false-reject rates across the full pool of verifiers, and further achieves sublinear regret against the best fixed verifier in hindsight under a combined cost and consistency objective. Experiments on reasoning dataset benchmarks show that OMVV achieves higher accuracy at lower verification cost than any single fixed verifier, across a range of operating budgets.
\end{abstract}
\section{INTRODUCTION}
Large language models increasingly solve complex problems by producing multi-step chain of thought reasoning traces \citep{wei2022chain}. As such traces are used to drive downstream decisions, search, and agentic pipelines, an incorrect final answer that goes unflagged can propagate silently into the final output. Verifying a candidate solution against ground-truth is an effective remedy: training a separate model to judge the correctness of a candidate answer and using it to rerank or filter samples substantially improves accuracy \citep{cobbe2021training}.

In many practical applications, a language model does not answer queries in isolation: it faces a stream of queries that arrive sequentially over time, which can naturally be modeled as an online environment. In such a stream, verification is not only a per query concern, since a mistake left unflagged at one step can propagate forward into the queries and responses that follow, whether by feeding a wrong intermediate result into subsequent reasoning or by shaping later interactions around a flawed answer. This has motivated a growing line of work that studies verification of language models directly as an online problem \citep{kiyani2026when, balcan2026online}. However, providing an oracle that checks the correctness of every single generated response in this online setting is often not doable in practice.

To avoid the cost of obtaining ground-truth for every query, a weak verifier, an inexpensive way to estimate correctness that is cheaper than the oracle but also less reliable, can be integrated. Relying on such a verifier makes it possible to decide most responses without ever querying the oracle, calling the oracle only on the subset of responses the weak verifier is least certain about. However, a single weak verifier might not be sufficient throughout the entire stream: how well it matches the queries it encounters depends on their subject and difficulty, both of which can shift over the course of a stream. A verifier well suited to easier queries may become overconfident as difficulty increases, while a verifier calibrated for harder queries may query the oracle unnecessarily on queries it could have resolved on its own. We propose OMVV, which addresses this by maintaining a pool of candidate weak verifiers with differing cost and verification performance rather than committing to a single one. At each round, OMVV routes the response to one of the candidate verifiers. The oracle is queried only when the selected verifier is itself uncertain about the response. This routing decision is learned online, and OMVV additionally satisfies a regret guarantee on it, ensuring that its accuracy and cost approach those of the best fixed verifier in the pool in hindsight. Across mathematical reasoning, logic, and factual tasks, OMVV outperforms single verifier selective verification \citep{kiyani2026when}, run separately with each of the two candidate weak verifiers, on accuracy and cost simultaneously.

\textbf{Contributions.} Overall, our contributions can be summarized as follows:
\begin{enumerate}
\item[I)] We introduce OMVV, an online algorithm for verifying language model outputs that maintains a pool of weak verifiers with differing cost and verification performance and adaptively routes each response to a candidate verifier based on its historical performance.
\item[II)]It is proven that OMVV maintains distribution-free finite-time control over both wrongly accepting an incorrect
response and wrongly rejecting a correct one. We further establish a regret guarantee on the routing decision itself, showing that OMVV's accuracy and cost approach those of the best fixed verifier in the pool in hindsight.
\item[III)] Through experiments on mathematical reasoning, logic, and factual tasks, it is demonstrated that OMVV outperforms single-verifier selective verification \citep{kiyani2026when} achieving better accuracy and cost simultaneously.
\end{enumerate}
\section{RELATED WORKS}
Verifying language model outputs without exhaustive ground-truth checking has been approached from several directions. One line of work improves individual verification mechanisms. Some use LLM as a judge to evaluate outputs directly \citep{liu-etal-2023-g}, while others check generated claims against source documents \citep{tang-etal-2024-minicheck}. Other work trains  verifiers via next token prediction rather than discriminative classification \citep{zhang2025generative}, or scores intermediate reasoning steps with process level reward models \citep{wang-etal-2024-math}. A separate line of work improves test time response selection by combining multiple verifiers. Weaver aggregates weak verifiers using weak supervision to estimate verifier accuracies with minimal labeled data \citep{saad-falcon2025shrinking}, while Multi-Agent Verification combines best of $n$ sampling with equal weight voting across multiple aspect verifiers to scale test time compute \citep{lifshitz2025multiagent}. Both use a fixed verifier aggregation procedure at inference time: candidate responses are generated and then scored by an ensemble of verifiers, rather than adaptively deciding, for each query, which verifier to use or whether to escalate to a costly oracle. Closest to our setting is Selective Strong Verification (SSV) \citep{kiyani2026when}, which uses a single weak verifier together with adaptive accept and reject thresholds. In contrast, OMVV considers a pool of $K$ candidate weak verifiers and learns how to route each round among them through an adaptive policy, while also establishing a regret guarantee for the routing decisions. SSV's accept and reject framework is closely related to the broader selective prediction literature. Foundational work poses classification with a reject option as risk minimization with an explicit rejection cost \citep{JMLR:v9:bartlett08a} and formalizes the resulting risk coverage tradeoff \citep{JMLR:v11:el-yaniv10a}; more recent work extends selective classification to online settings with partial feedback \citep{gangrade2021online}. A related line of work is learning to defer. The goal is to learn, for each instance, whether to route the decision to a model or to a human expert \citep{mozannar2020consistent, mozannar2023should, okati2021differentiable, verma2022calibrated}. In contrast, we consider online routing among a heterogeneous pool of weak verifiers with differing costs and verification performance, together with the option to escalate to a costly oracle.
\section{PROBLEM SETUP}
\label{sec:psetup}
The verification of language models in an online setting is studied in this work. Queries arrive sequentially, and multiple candidate verifiers are available to verify the generated responses. There is also an oracle that provides the ground-truth verification label for a pair when queried, at a cost higher than that of any candidate verifier. The generator receives queries sequentially and, for each query, produces a fixed set of $M$ candidate responses under a predefined generation budget, consistent with prior selective-verification settings that consider multiple candidate generations per query \citep{kiyani2026when}; these candidates are then given to the verifier one at a time for verification.

\textbf{Online query and response:} In the online setting, at each time $t = 1, \dots, T$, a query response pair $p_t = (x_t, r_t)$ is presented for verification, where $r_t$ is one candidate response to the query $x_t$. Next, the pair will be verified to see whether the generated response $r_t$ is a valid response for $x_t$ or not. We make no assumption that the incoming query distribution remains fixed over time; the subject matter or difficulty of the queries may change throughout the stream.

\textbf{Oracle verification:} There is an oracle that provides the ground-truth verification for a given pair $p_t$ if needed. The oracle verifier maps $p_t$ to a binary value, denoted $g_t \in \{0,1\}$, which indicates whether the generated response $r_t$ is correct or not.

\textbf{Multi-verification:}
A set of $K$ candidate verifiers is available at each time $t$. Based on the historical performance of the verifiers, a single verifier $k_t$ is selected among the candidate verifiers at each round. The verifier maps $p_t$ into a discrete space $\{A, R, Q\}$, and we denote the resulting decision as $d_t \in \{A, R, Q\}$, where $A$ stands for accepting the response $r_t$, $R$ stands for rejecting the generated response for query $x_t$, and $Q$ stands for querying the oracle, which models the uncertainty of the verifier in verifying the given pair $p_t$.

\textbf{Problem Statement.} At any time step $0 < t \leq T$:
\begin{center}
\fbox{\begin{minipage}{0.95\textwidth}
\vspace{2pt}
\textbf{Given:}
\begin{itemize}
    \item a query response pair $p_t = (x_t, r_t)$;
    \item $K$ verifiers 
    with different verification performance and cost;
    \item an oracle that provides ground-truth verification $g_t \in \{0,1\}$ for a pair $p_t$ when queried;
    \item target error rates $\alpha_I, \alpha_{II} \in (0,1)$ bounding the false-accept and false-reject rates;
    \item the historical pairs $(x_\tau, r_\tau)_{\tau=1}^{t-1}$ and their verification outcomes.
\end{itemize}
\textbf{Goal:} adaptively select a verifier for each pair $p_t$ to achieve high verification accuracy at low cost, while keeping the false-accept and false-reject rates below $\alpha_I$ and $\alpha_{II}$, respectively.
\end{minipage}}
\end{center}
\section{METHODOLOGY}
To verify the generated answer of a language model, a possible way is to query an oracle that provides ground truth on whether the generated response for the given query is correct \citep{cobbe2021training, lightman2024lets}. In the online setting, however, querying the oracle at every timestep is prohibitively expensive, which motivates deciding most rounds with a cheaper weak verifier and reserving the oracle for the rounds the verifier is least certain about \citep{kiyani2026when}. Committing to a single such verifier can be limiting, since weak verifiers can differ in their verification performance \citep{saad-falcon2025shrinking}. We therefore maintain a set of $K$ verifiers, all cheaper than the oracle, differing in cost and verification performance, and design an online algorithm that learns which verifier to select at each round from the oracle feedback it collects over time.

Designing such an algorithm requires a way to measure how well a choice of verifier performs at each step, along two complementary axes: the cost of using that verifier, and how closely its decisions agree with the oracle. We formalize both next.

\textbf{Cost.} Each candidate verifier 
incurs a fixed cost $c_{k} \in [0,1]$ to run, reflecting the intuition from before that cheaper verifiers should be preferred when they suffice. Always selecting the cheapest verifier trivially minimizes the cost incurred, but may lead to poorer verification performance, since the cheapest verifier is not necessarily the right one for every input.

\textbf{Error.}  A decision made without the oracle can go wrong in two ways, either accepting an incorrect response or rejecting a correct one. For verifier $k$, let $T_k(T) := \{t \le T : k_t = k\}$ denote the rounds routed to verifier $k$, and define 
\begin{equation}
\label{eq:n0n1}
N_0^{(k)}(T) := \big|\{t \in T_k(T) : g_t = 0\}\big|, \qquad
N_1^{(k)}(T) := \big|\{t \in T_k(T) : g_t = 1\}\big|.
\end{equation}
The per-verifier false-accept and false-reject rates are then
\begin{equation}
\label{eq:erri}
\text{Err}_I^{(k)}(T) := \frac{1}{N_0^{(k)}(T)}
\sum_{t \in T_k(T)} \mathbf{1}\{g_t = 0,\ d_t = A\},
\end{equation}
\begin{equation}
\label{eq:errii}
\text{Err}_{II}^{(k)}(T) := \frac{1}{N_1^{(k)}(T)}
\sum_{t \in T_k(T)} \mathbf{1}\{g_t = 1,\ d_t = R\}.
\end{equation}
The pooled quantities $N_0(T)$, $N_1(T)$, $\text{Err}_I(T)$, $\text{Err}_{II}(T)$ are defined identically, summing over all $t \le T$ without restricting to a single verifier's sub stream. The oracle can always be queried to avoid error entirely, but doing so incurs the maximum possible cost. A good policy must therefore keep cost and error small simultaneously, which is the tradeoff our algorithm is designed to navigate. 
We formalize this tradeoff as the following loss for each candidate verifier $k$ at round $t$:
\begin{equation}
\label{eq:loss}
\ell_t(k) := \lambda \big(s_{k,t} - g_t\big)^2 + (1-\lambda)\, c_k,
\end{equation}
where $\lambda \in [0,1]$ is a tradeoff parameter. In
Equation~\ref{eq:loss}, the first term captures the consistency of verifier $k$ with the oracle: $s_{k,t}$ is the score verifier $k$ would produce for the current pair $p_t$, which determines the decision made by verifier $k$ at time $t$, and a smaller distance from the oracle's label $g_t$ reflects a more reliable decision by
verifier $k$ at this round. We describe how $s_{k,t}$ is computed in further detail later in this section. The second term, $c_k$, is a fixed cost associated with verifier $k$.

We treat choosing which verifier to consult as a $K$-armed decision problem. One verifier is sampled and used at each round, as in the multi-armed bandit setting \citep{auer2002nonstochastic}. Querying every verifier at every round would undermine the cost savings that motivate routing in the first place, so OMVV queries only the sampled verifier $k_t$. The verifier score is therefore observed for $k_t$ alone. This is the defining property of bandit feedback, and it separates our setting from the full information ``prediction with expert advice'' setting \citep{littlestone1994weighted, freund1997decision}, where every expert's loss is revealed every round regardless of which one is followed.

The loss $\ell_t(k)$ in Equation~\ref{eq:loss} has two parts, each observed under a different rule. The cost term $c_k$ is fixed and known in advance for every verifier, so nothing needs to be observed to use it. It is available every round for every $k$, selected or not. The consistency term depends on both the verifier score $s_{k,t}$ and the oracle label $g_t$. Since only the sampled verifier is queried, its score is the only verifier-specific score observed at round $t$, while $g_t$ is observed only when the oracle is queried. The  loss is therefore full information in its cost part, and bandit feedback, thinned further by the oracle query decision, in its consistency part. Feedback that sits between full information and bandit feedback has been studied broadly under partial monitoring \citep{piccolboni2001discrete, cesa2006regret}, and selectively querying the true label is closely related to label efficient prediction \citep{cesa2005minimizing}. Our setting does not need the full generality of partial monitoring, however: it can be handled with the same importance weighted estimators used for standard adversarial bandits \citep{auer2002nonstochastic}, alongside a known, deterministic cost term that needs no observation at all.


\subsection{ALGORITHM}
\label{sec:algorithm}
We now describe OMVV in full: how a verifier is selected at each round, how its threshold pair governs the accept/reject/query decision, and how the sampled verifier's thresholds and score combiner are updated from the oracle's feedback.

In the multi-verification setting, we have $K$ candidate verifiers, each initialized in advance with weight $w_k^1 = 1/K$. The weight $w_k^t$ reflects how favorably verifier $k$ has performed under the objective in Equation~\ref{eq:loss} up to round $t$. Normalizing these weights, $ \bar{w}_k^t := \frac{w_k^t}{\sum_{j=1}^K w_j^t}, $ gives $\bar{w}_k^t$, the probability of selecting verifier $k$ at round $t$. Collecting these normalized weights over all $K$ verifiers gives a probability mass function $\mathbf{W}^t := (\bar{w}_1^t, \dots, \bar{w}_K^t)$. At round $t$, a verifier $k_t\in\{1,\dots,K\}$ is sampled according to $\mathbf{W}^t$ and used to evaluate the incoming pair $p_t=(x_t,r_t)$.

When verifier $k$ is selected at round $t$, it is prompted with the pair $p_t$ and returns a raw score $u_{k,t} \in [0,1]$, with larger values indicating higher confidence that the response is correct. In addition to the raw score, we also use agreement as a second signal of correctness, computed from the generator's own outputs and independent of any verifier's judgment. We combine the two via logistic regression. Formally, the agreement feature $a_t \in [0,1]$ is the fraction of the other candidate responses generated for the same query that agree with the current one. We refer to the result as a combined score $s_{k,t}$, and base each verifier's decision on $s_{k,t}$.
Formally,
\begin{equation}
\label{eq:combiner}
s_{k,t} := \sigma\!\left(\beta_0^{(k),t} + \beta_1^{(k),t} u_{k,t} +
\beta_2^{(k),t} a_t\right)
\end{equation}
where $\sigma$ is the logistic sigmoid and $\beta^{(k),t} := (\beta_0^{(k),t}, \beta_1^{(k),t}, \beta_2^{(k),t})$ is verifier $k$'s coefficient vector, fit separately for each verifier on its own history of oracle queried rounds and refit as more become available. 

Given the combined score $s_{k,t}$, each verifier's decision follows a two-threshold rule: reject below a lower threshold, accept above an upper threshold, and defer to the oracle in between. The intuition behind this structure, as laid out by \citet{kiyani2026when}, is that when a verifier's score is confidently low, the response is likely incorrect and should be rejected; when the score is confidently high, the response is likely correct and should be accepted; and when the score falls in between, neither conclusion is well supported, and the safer choice is to defer to the oracle rather than guess. We adopt this structure for each of our $K$ verifiers, applied to its own combined score $s_{k,t}$.

Each verifier $k$ maintains its own pair of adaptive thresholds $(\tau_R^{(k),t}, \tau_A^{(k),t})$, with $\tau_R^{(k),t} \le \tau_A^{(k),t}$, updated only on rounds when verifier \(k\) is selected and oracle feedback is observed. At round $t$, the combined score $s_{k_t,t}$ of the sampled verifier is compared against that verifier's own thresholds:
\begin{equation}
\label{eq:region}
\text{region}(t) =
\begin{cases}
\text{Accept}, & s_{k_t,t} > \tau_A^{(k_t),t}, \\
\text{Uncertain}, & \tau_R^{(k_t),t} < s_{k_t,t} \le \tau_A^{(k_t),t}, \\
\text{Reject}, & s_{k_t,t} \le \tau_R^{(k_t),t}.
\end{cases}
\end{equation}
A separate threshold pair is kept for each verifier, rather than one shared pair across all $K$, since different verifiers can differ substantially in both the scale of their scores and how those scores relate to correctness.

The action taken at round $t$ follows directly from this region: in the Accept region the response is accepted, in the Reject region it is rejected, and in the Uncertain region, the oracle is queried and its label $g_t$ determines the action. Querying the oracle only in the Uncertain region, however, is not enough on its own. If a verifier's Accept region rarely or never triggers a query, the algorithm does not learn whether the responses it accepts there are actually correct, and an overly permissive threshold could continue accepting incorrect responses without a signal for correction. To prevent this, the algorithm queries the oracle with a small probability even in the Accept and Reject regions, governed by exploration probabilities $P_A$ and $P_R$: with probability $1-P_A$ an Accept-region round is accepted outright, and with probability $P_A$ the oracle is queried to obtain feedback for the
adaptive updates without changing the Accept decision already made; the Reject region is handled symmetrically with $P_R$.

When the oracle is queried, whether from the
Uncertain region or from exploration in the Accept or Reject region, the selected verifier's threshold pair is updated using the resulting label $g_t$. For
verifier $k_t$, with step size $\eta > 0$,
\begin{equation}
\label{eq:tau-update}
\tau_A^{(k_t),t+1} \leftarrow \max\!\left(\tau_R^{(k_t),t},\;
\tau_A^{(k_t),t} + \eta \cdot
\frac{\mathbf{1}[g_t{=}0]\big(\mathbf{1}[s_{k_t,t} > \tau_A^{(k_t),t}]
- \alpha_I\big)}{q_t}\right),
\end{equation}
\begin{equation}
\label{eq:tau-update-r}
\tau_R^{(k_t),t+1} \leftarrow \min\!\left(\tau_A^{(k_t),t+1},\;
\tau_R^{(k_t),t} + \eta \cdot
\frac{\mathbf{1}[g_t{=}1]\big(\alpha_{II} -
\mathbf{1}[s_{k_t,t} \le \tau_R^{(k_t),t}]\big)}{q_t}\right),
\end{equation}
where $q_t \in (0,1]$ is the probability with which the oracle is queried at round $t$ ($P_A$ in the Accept region, $P_R$ in the Reject region, and $1$ in the Uncertain region). Every other verifier's thresholds are left unchanged. Three things about this update are worth making explicit. The $\max(\cdot)$ and $\min(\cdot)$ operations preserve the ordering $\tau_R^{(k_t),t+1} \le \tau_A^{(k_t),t+1}$ after every update, so the region ordering in Equation~\ref{eq:region} is never violated. The accept threshold update only fires when $g_t=0$, since that is exactly the condition under which accepting the response would mean wrongly accepting an incorrect one, and it moves the threshold in the direction that drives the false accept rate toward the target $\alpha_I$; the reject threshold update is the mirror image, tied to $g_t=1$ and the target $\alpha_{II}$. Dividing by $q_t$ 
corrects for partial oracle feedback through importance weighting, so the update is unbiased with respect to the corresponding full feedback update.

The routing weights are updated separately, and every round, for every verifier, not only the sampled one. For each $k$, define the loss estimate
\begin{equation}
\label{eq:loss-estimate}
\hat\ell_t(k) := \lambda \big(s_{k,t} - g_t\big)^2 \cdot
\frac{\mathbf{1}[k_t{=}k]\, O_t}{\bar w_k^t\, q_t} +
(1-\lambda)\, c_k,
\end{equation}
where $O_t \in \{0,1\}$ indicates whether the oracle was queried at round $t$.
The cost term is deterministic and known for every $k$, so it is added every round regardless of which verifier was selected. The consistency term, however, is nonzero only for the sampled verifier on a oracle queried round. Dividing by $\bar w_k^t q_t$ corrects for the fact that the consistency term is observed only when verifier $k$ is sampled and the oracle is queried. Since $\bar w_k^t q_t$ is the probability of that event, inverse probability weighting makes $\hat{\ell}_t(k)$ unbiased for the corresponding full feedback loss. The weight of each verifier is then updated multiplicatively,
\begin{equation}
\label{eq:weight-update}
w_k^{t+1} \leftarrow w_k^t \cdot \exp\!\big(-\eta_H\, \hat\ell_t(k)\big),
\end{equation}
for a fixed learning rate $\eta_H > 0$, and $\bar w_k^{t+1}$ is obtained by renormalizing as before. Algorithm~\ref{alg:omvv} summarizes the full procedure described above.

\begin{algorithm}[t]
\caption{Online Multi-Verifier Verification (OMVV)}
\label{alg:omvv}
\begin{algorithmic}[1]
\Require Target error rates $\alpha_I, \alpha_{II} \in (0,1)$;
tradeoff parameter $\lambda \in [0,1]$; exploration probablities
$\{P_A, P_R\}$; threshold step size $\eta > 0$;
Hedge learning rate $\eta_H > 0$; verifier costs $(c_k)_{k=1}^K$;
\State Initialize weights $w_k^1 \gets 1/K$ for each $k = 1, \dots, K$
\For{$t = 1, 2, \dots$}
  \State Receive pair $p_t = (x_t, r_t)$
  \State \textbf{Route:} normalize weights, $\bar w_k^t \gets
    w_k^t / \sum_j w_j^t$ for each $k$; sample $k_t \sim \bar w^t$
  \State \textbf{Score:} obtain raw score $u_{k_t,t}$; compute combined score $s_{k_t,t}$ (~\ref{eq:combiner})
  \State \textbf{Region test:} set $O_t \gets 0$; compare $s_{k_t,t}$ to
    $(\tau_R^{(k_t),t}, \tau_A^{(k_t),t})$
    \If{$s_{k_t,t} > \tau_A^{(k_t),t}$}
  \State $d_t \gets A$; w.p. $P_A$, query the oracle
    for $g_t$ and set $O_t \leftarrow 1$; set $q_t \gets P_A$
\ElsIf{$s_{k_t,t} \le \tau_R^{(k_t),t}$}
  \State $d_t \gets R$; w.p. $P_R$, query the oracle
    for $g_t$ and set $O_t \leftarrow 1$; set $q_t \gets P_R$
\Else
  \State $d_t \gets Q$; query the oracle for $g_t$ and set $O_t \leftarrow 1$; 
\EndIf
  \State \textbf{Decision:} accept $r_t$ if $d_t=A$ or $(d_t=Q \land g_t=1)$; otherwise reject $r_t$.
  \If{$O_t = 1$}
    \State \textbf{Threshold update} for verifier $k_t$ only
      (~\ref{eq:tau-update}--\ref{eq:tau-update-r})
  \EndIf
  \State (thresholds of every $k \ne k_t$ left unchanged)
  \State \textbf{Routing update:} $\forall k \in [K]$,
    compute the loss estimate $\hat\ell_t(k)$
    (~\ref{eq:loss-estimate}) and update $w_k^{t+1}$
    (~\ref{eq:weight-update})
\EndFor
\end{algorithmic}
\end{algorithm}

\section{THEORETICAL ANALYSIS}
\textbf{Error control.} OMVV updates each verifier's thresholds only on rounds when the oracle is queried and feedback is observed. Since the query stream, the generator, and each verifier's performance are all allowed to be arbitrary and time-varying, it is not obvious that these threshold updates keep the realized false-accept and false-reject rates close to their targets $\alpha_I, \alpha_{II}$. The following theorem shows that they do: OMVV's error rates remain controlled at every finite horizon, with no assumption on the stream, the generator, or the
verifiers.

\begin{theorem}[Error control]
\label{thm:error-control}
Fix a horizon $T \ge 1$. For any $\delta \in (0,1)$, with probability
at least $1 - \delta$,
\[
\text{Err}_I(T) \le \alpha_I + \Delta_{\text{pool}}^I(T, \delta),
\qquad
\text{Err}_{II}(T) \le \alpha_{II} + \Delta_{\text{pool}}^{II}(T, \delta).
\]
Here, $\Delta_{\mathrm{pool}}^I(T,\delta)$ and
$\Delta_{\mathrm{pool}}^{II}(T,\delta)$ are finite sample slack terms that
decrease as the relevant per verifier sample counts increase, for fixed
exploration and algorithm parameters. Their exact form and the proof are
given in Section~\ref{sec:proofs}.
\end{theorem}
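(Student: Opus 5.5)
The plan is to prove the bound verifier by verifier on each sub-stream $T_k(T)$, and then pool the $K$ bounds by a weighted average. The key tool is the standard online (sub)gradient argument for quantile-tracking thresholds, as in \citet{kiyani2026when}. Telescoping a bounded iterate turns the average of the update increments into an $O(1/(\eta N))$ quantity. I describe the argument for $\text{Err}_I$; the argument for $\text{Err}_{II}$ is the mirror image, with $g_t=1$, $\tau_R$ and $\alpha_{II}$ in place of $g_t=0$, $\tau_A$ and $\alpha_I$.

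\textbf{Step 1 (deterministic telescoping on each sub-stream).} Fix $k$ and let $t_1<t_2<\dots$ enumerate $T_k(T)$; the thresholds of $k$ change only at these rounds. Write
\[
Z_t := \frac{O_t\,\mathbf{1}[g_t{=}0]\big(\mathbf{1}[s_{k,t}>\tau_A^{(k),t}]-\alpha_I\big)}{q_t}.
\]
Ignoring the projection $\max(\tau_R,\cdot)$, summing the updates in Equation~\ref{eq:tau-update} gives
\[
\eta\sum_{t\in T_k(T)} Z_t = \tau_A^{(k),T+1}-\tau_A^{(k),1}.
\]
The scores lie in $[0,1]$, so I need the thresholds to stay in a bounded interval. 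The first thing to check is that $\tau_A$ cannot drift far above $1$: once $\tau_A\ge 1$ no round lands in Accept, so every increment is $\le 0$. This gives $\tau_A\in[-\eta\alpha_I/\min(P_A,P_R,1),\,1+\eta/P_A]$. The projection onto $\tau_R$ is the delicate part. I handle it by noting that the projection only clips $\tau_A$ upward when it would otherwise fall below $\tau_R$, and then bounding the cumulative clipping by the same range argument applied jointly to the pair. The outcome is
\[
\Big|\sum_{t\in T_k(T)} Z_t\Big|\le B/\eta, \qquad B = O(1+\eta/\min(P_A,P_R)).
\]

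\textbf{Step 2 (martingale concentration).} Let $\mathcal{F}_{t}$ be the history before the oracle coin is flipped, which includes $k_t$ and the region. Conditionally on $\mathcal{F}_t$, we have $\mathbb{E}[O_t/q_t]=1$, so
\[
X_t := Z_t-\mathbf{1}[g_t{=}0](\mathbf{1}[d_t{=}A]-\alpha_I)
\]
is a martingale difference sequence. Here I use that $d_t=A$ exactly when $s_{k,t}>\tau_A^{(k),t}$. The differences are bounded by $1/\min(P_A,P_R)$ and have conditional variance at most $\mathbf{1}[g_t{=}0]/P_A$. Applying Freedman's inequality to $\sum_{t\in T_k(T)}X_t$ gives a deviation of order
\[
\sqrt{N_0^{(k)}(T)\log(2K/\delta)/P_{\min}}+\log(2K/\delta)/P_{\min}.
\]
The budget $\delta/(2K)$ covers both error types and all $K$ verifiers. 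The sub-stream count $N_0^{(k)}$ is random, so I would use a peeling or union bound over its $T$ possible values, which costs an extra $\log T$. Combining with Step 1 yields, for each $k$,
\[
\text{Err}_I^{(k)}(T)\le \alpha_I+\frac{B}{\eta N_0^{(k)}}+C\sqrt{\frac{\log(2KT/\delta)}{P_{\min}N_0^{(k)}}}+\frac{C\log(2KT/\delta)}{P_{\min}N_0^{(k)}} =: \alpha_I+\Delta^{I}_k.
\]

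\textbf{Step 3 (pooling).} The pooled error is a convex combination of the per-verifier errors:
\[
\text{Err}_I(T)=\sum_k \frac{N_0^{(k)}}{N_0}\,\text{Err}_I^{(k)}(T).
\]
Hence on the intersection event it is at most $\alpha_I+\Delta_{\mathrm{pool}}^I$, where
\[
\Delta_{\mathrm{pool}}^I:=\sum_k \frac{N_0^{(k)}}{N_0}\,\Delta^{I}_k .
\]
This is exactly a slack that decreases in the per-verifier counts. Verifiers with $N_0^{(k)}=0$ are simply dropped from the sum.

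I expect the main obstacle to be Step 1: the coupled projections $\max(\tau_R,\cdot)$ and $\min(\tau_A,\cdot)$ break the clean telescoping. If the joint-range argument fails, I would fall back to showing that clipping can only push $\tau_A$ upward. Upward clipping makes the Accept region smaller, so it can only lower the false-accept count. That one-sided inequality is all the upper bound needs.
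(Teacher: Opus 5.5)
Your overall structure matches the paper's proof: per-verifier telescoping of the importance-weighted increments, a range bound on the thresholds, Freedman's inequality for $\sum_t\bigl(Z_t-\mathbf{1}[g_t{=}0](\mathbf{1}[d_t{=}A]-\alpha_I)\bigr)$, a union bound over the $K$ verifiers, and pooling through the convex combination $\sum_k (N_0^{(k)}/N_0)\,\mathrm{Err}_I^{(k)}(T)$.

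\textbf{Step 1 must use your fallback, not your primary route.} The two-sided bound $\bigl|\sum_{t\in T_k(T)}Z_t\bigr|\le B/\eta$ is false, so no joint-range argument can bound the cumulative clipping. Take $\tau_R^{(k),t}=\tau_A^{(k),t}$, which the initialization allows. Suppose verifier $k$ keeps receiving rounds with $g_t=0$ and $s_{k,t}\le\tau_R^{(k),t}$. Each queried round then has $Z_t=-\alpha_I/P_R$. The raw accept update is clipped straight back to $\tau_R^{(k),t}$, and the reject update is inactive because $g_t=0$. The thresholds never move, while $\sum_t Z_t\to-\infty$.

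Your fallback is the correct argument and is exactly the paper's Claim 2. Its justification, however, is not that a smaller Accept region lowers the false-accept count; that would require comparing two diverging threshold trajectories. It is a per-round algebraic inequality. Since $\max(\tau_R^{(k),t},\cdot)$ is at least its second argument, $\tau_A^{(k),t+1}\ge\tau_A^{(k),t}+\eta Z_t$ on every round. Hence $\eta\sum_t Z_t\le\tau_A^{(k),T+1}-\tau_A^{(k),1}$, and only the upper range $\tau_A^{(k),t}\le 1+\eta/P_A$ is needed. Clipping never raises $\tau_A$ above its previous value, because $\tau_R^{(k),t}\le\tau_A^{(k),t}$.

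The reject side mirrors this. With $Z^R_t:=O_t\mathbf{1}[g_t{=}1](\mathbf{1}[s_{k,t}\le\tau_R^{(k),t}]-\alpha_{II})/q_t$, the projection $\min(\tau_A^{(k),t+1},\cdot)$ gives $\tau_R^{(k),t+1}\le\tau_R^{(k),t}-\eta Z^R_t$. So that side needs only a lower bound on $\tau_R$. Combined with Step 2, this one-sided bound is all the proof uses.

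\textbf{In Step 2 you are more careful than the paper.} The paper's Claim 4 plugs the variance proxy $N_0^{(k)}(T)/q_{\min}$ directly into Freedman's inequality. That proxy is random, because the $k_t$ are sampled, whereas Freedman's inequality is stated for a fixed variance level and fixed deviation. Your union bound over the $T$ possible values of $N_0^{(k)}(T)$ is what actually justifies a radius depending on $N_0^{(k)}(T)$. The price is $\log(2KT/\delta)$ in place of the paper's $\log(4K/\delta)$; dyadic peeling would cut the extra factor to order $\log\log T$. Since the theorem only asks for slack terms that decrease in the per-verifier counts, your version proves the statement.

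Two small corrections:
\begin{itemize}
\item On Reject-region rounds with $g_t=0$, the conditional variance is $\alpha_I^2(1-P_R)/P_R$, which can exceed $1/P_A$. The variance bound should therefore read $\mathbf{1}[g_t{=}0]/\min(P_A,P_R)$. This is harmless, since your final slack already uses $P_{\min}$.
\item Include $g_t$ in the conditioning $\sigma$-field, as the paper does with $\mathcal{H}_t$. Then $\mathbf{1}[g_t{=}0]$ is fixed when you average over the oracle coin.
\end{itemize}
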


\textbf{Regret.} Because OMVV operates online, it does not know in advance which of the $K$ verifiers is the best choice under the combined cost consistency objective in Equation~\ref{eq:loss}. Instead, it must learn an effective routing policy from the feedback observed over time. We therefore evaluate its routing performance through regret relative to the best fixed verifier in hindsight, denoted by $k^\star \in \arg\min_{k\in[K]} \sum_{t=1}^{T}\ell_t(k)$. Formally, let $R_T(k^\star):=\sum_{t=1}^{T}\ell_t(k_t)-\sum_{t=1}^{T}\ell_t(k^\star)$ denote the regret of OMVV's routing decisions relative to $k^\star$.
\begin{theorem}[Routing regret]
\label{thm:regret}
For any horizon $T \ge 1$, OMVV achieves the following expected regret
\[
  \mathbb{E}\big[R_T(k^\star)\big] = O\!\left(\sqrt{T\log K}\cdot
  \sqrt{\frac{\lambda^2 K}{q_{\min}} + \big(1-\lambda^2\big)}\right).
\]
In particular, for fixed $K$, $\lambda$, and $q_{\min}>0$, the expected regret is $O(\sqrt{T})$ and hence sublinear in $T$. $q_{\min}$ is the smallest probability, over the whole horizon, with which the oracle is queried. The proof is given in Section~\ref{sec:proofs}.
\end{theorem}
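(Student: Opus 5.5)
We prove Theorem~\ref{thm:regret} with the standard Exp3/Hedge potential argument applied to the loss estimates $\hat\ell_t(k)$ in Equation~\ref{eq:loss-estimate}. Write $\mathcal F_{t-1}$ for the history before round $t$. The weights $\bar w^t$, the thresholds, and the combiner coefficients $\beta^{(k),t}$ are all $\mathcal F_{t-1}$-measurable. We treat the pair $p_t$ and the label $g_t$ as fixed by an oblivious adversary, so that $\ell_t(k)$ is determined given $\mathcal F_{t-1}$.

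\textbf{Step 1 (unbiasedness).} The query probability $q_t$ depends on the region, and hence on $k_t$. So we apply the importance weight conditionally: given $\mathcal F_{t-1}$ and $k_t=k$, the oracle is queried with probability $q_t$. This gives $\mathbb E[\mathbf 1[k_t=k]O_t/(\bar w_k^t q_t)\mid\mathcal F_{t-1}]=1$, and therefore $\mathbb E[\hat\ell_t(k)\mid\mathcal F_{t-1}]=\ell_t(k)$ for every $k$. It also gives $\mathbb E[\sum_k \bar w_k^t\hat\ell_t(k)\mid\mathcal F_{t-1}]=\mathbb E[\ell_t(k_t)\mid\mathcal F_{t-1}]$.

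\textbf{Step 2 (potential bound).} Let $\Phi_t=\log\sum_k w_k^t$. Since $\hat\ell_t(k)\ge 0$, we can use $e^{-x}\le 1-x+x^2/2$ for $x\ge0$ and $\log(1+y)\le y$ to get
\[
\Phi_{t+1}-\Phi_t\le -\eta_H\sum_k\bar w_k^t\hat\ell_t(k)+\frac{\eta_H^2}{2}\sum_k\bar w_k^t\hat\ell_t(k)^2 .
\]
We also have $\Phi_{T+1}-\Phi_1\ge -\eta_H\sum_t\hat\ell_t(k^\star)-\log K$. Combining these, taking expectations, and using Step 1 yields
\[
\mathbb E[R_T(k^\star)]\le \frac{\log K}{\eta_H}+\frac{\eta_H}{2}\sum_{t=1}^T\mathbb E\Big[\sum_k\bar w_k^t\hat\ell_t(k)^2\Big].
\]
This holds simultaneously for $k^\star$ chosen in hindsight, because $k^\star$ is deterministic under the oblivious adversary.

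\textbf{Step 3 (second moment), the main step.} Write $\hat\ell_t(k)=X_k+C_k$, where $X_k=\lambda(s_{k,t}-g_t)^2\mathbf 1[k_t=k]O_t/(\bar w_k^tq_t)$ and $C_k=(1-\lambda)c_k$. Expand
\[
\hat\ell_t(k)^2=X_k^2+2X_kC_k+C_k^2 .
\]
For the first term, $(s-g)^2\le1$ gives $\mathbb E[\sum_k\bar w_k^tX_k^2\mid\mathcal F_{t-1}]\le\lambda^2\sum_k \mathbb E[1/q_t\mid k_t=k]\le\lambda^2K/q_{\min}$. For the cross term, $\mathbb E[\sum_k\bar w_k^t X_kC_k\mid\mathcal F_{t-1}]\le\lambda(1-\lambda)$. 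For the last term, $\sum_k\bar w_k^tC_k^2\le(1-\lambda)^2$. Adding these, $\lambda^2 K/q_{\min}+2\lambda(1-\lambda)+(1-\lambda)^2=\lambda^2K/q_{\min}+(1-\lambda^2)$, which is exactly the second factor in the theorem.

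The delicate point is the first term. The importance weight carries $1/\bar w_k^t$, but it is cancelled by the outer $\bar w_k^t$ and by the sampling probability. This leaves only $K$, and we need $q_t\ge q_{\min}$ on every round for the $1/q_t$ factor to be controlled. It is also where the dependence of $q_t$ on $k_t$ and on the region must be handled by conditioning on $k_t$ before averaging over the oracle coin.

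\textbf{Step 4 (tuning).} Setting $V=\lambda^2K/q_{\min}+1-\lambda^2$, the bound from Step 2 reads $\log K/\eta_H+\eta_H TV/2$. Choosing $\eta_H=\sqrt{2\log K/(TV)}$ gives $\mathbb E[R_T(k^\star)]\le\sqrt{2T\log K\,V}$, which is the claimed rate. If $T$ is unknown, a doubling trick changes only the constant. For fixed $K,\lambda,q_{\min}$ this is $O(\sqrt T)$, hence sublinear.
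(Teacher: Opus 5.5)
Your proposal follows the paper's proof essentially step for step. The shared steps are: per-arm unbiasedness of $\hat\ell_t(k)$; the Hedge potential inequality via $e^{-x}\le 1-x+x^2/2$ and $\log(1+u)\le u$; the second-moment bound, where the outer $\bar w_k^t$ cancels the $1/\bar w_k^t$ of the importance weight and leaves $\lambda^2\sum_k 1/q_t(k)+(1-\lambda^2)$; and the tuning of $\eta_H$ to obtain $\sqrt{2T\log K\,(\lambda^2K/q_{\min}+1-\lambda^2)}$. The only real difference is the conditioning. The paper conditions on $\mathcal G_t=\sigma(\mathcal F_{t-1},g_t,\{s_{j,t},q_t(j)\}_{j=1}^K)$, under which $\ell_t(k)$ is fixed for every $k$ with no oblivious-adversary assumption and no assumption that the raw scores $u_{k,t}$ are deterministic. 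Your $\mathcal F_{t-1}$-conditioning needs both assumptions, so the paper's version is slightly more general.

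One justification you give is wrong: $k^\star$ is \emph{not} deterministic under an oblivious adversary. Even with $p_t$ and $g_t$ fixed in advance, $\ell_t(k)$ depends on $s_{k,t}$, hence on $\beta^{(k),t}$. That coefficient is refit on verifier $k$'s oracle-queried rounds, so it depends on the algorithm's own past draws $k_s,O_s$; as you note, these coefficients are only $\mathcal F_{t-1}$-measurable. Per-round unbiasedness gives $\mathbb E[\sum_t\hat\ell_t(k)]=\mathbb E[\sum_t\ell_t(k)]$ only for a \emph{fixed} $k$. So what your argument actually establishes is the pseudo-regret bound $\mathbb E[\sum_t\ell_t(k_t)]-\min_k\mathbb E[\sum_t\ell_t(k)]\le\sqrt{2T\log K\,(\lambda^2K/q_{\min}+1-\lambda^2)}$. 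This is weaker than a bound on $\mathbb E[R_T(k^\star)]$ for the random hindsight minimizer, because $\mathbb E[\min_k\sum_t\ell_t(k)]\le\min_k\mathbb E[\sum_t\ell_t(k)]$.

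The paper's proof takes exactly the same fixed-comparator step: it uses $\mathbb E[\hat\ell_t(k^\star)\mid\mathcal G_t]=\ell_t(k^\star)$, which requires $k^\star$ to be $\mathcal G_t$-measurable. So you are not missing anything relative to the paper, but you should state the result for a fixed comparator rather than assert determinism. Passing to the hindsight $k^\star$ would need an extra ingredient. One option is freezing the combiners so the losses are genuinely oblivious. Another is a uniform-over-$k$ high-probability control of $\sum_t(\hat\ell_t(k)-\ell_t(k))$ in the style of Exp3.P, which the plain $1/(\bar w_k^t q_t)$-weighted estimator does not directly give.
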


\begin{figure}[t]
\centering
\includegraphics[width=\textwidth]{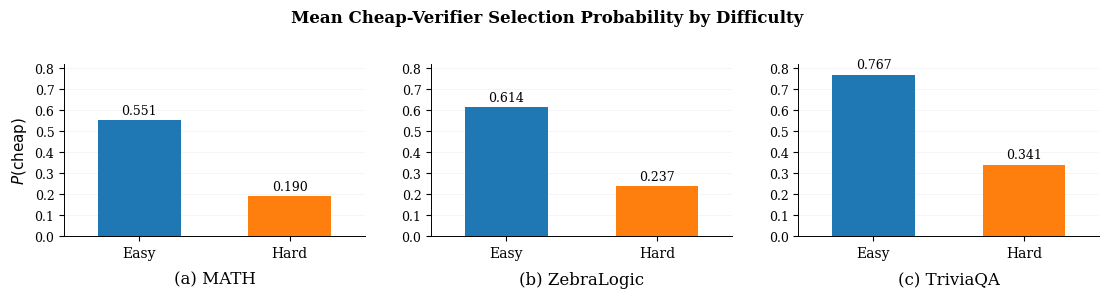}
\caption{Mean probability that OMVV routes to the cheaper verifier (Qwen2-1.5B), by difficulty. On easy problems, OMVV favors the cheaper verifier with substantially higher probability than on hard problems, where it shifts toward the more expensive verifier.}
\label{fig:cheap-verifier-by-difficulty}
\end{figure} 

\section{EMPIRICAL PERFORMANCE}

\textbf{Datasets:} We evaluate OMVV on four tasks spanning mathematical reasoning, logic puzzles, factual question answering, and algebra, to test whether the improvements it provides over a single fixed verifier hold beyond one narrow domain. \textbf{MATH.}  We use 400 problems from the MATH benchmark \citep{hendrycks2021measuring}, which contains five difficulty levels. We select 200 problems from the easiest level, D1, and 200 from the hardest level, D5, as our easy and hard subsets, respectively. \textbf{Zebra Logic.} A set of 200 logic-grid puzzles (100 easy, 100 hard) from ZebraLogic \citep{lin2025zebralogic}, each requiring the solver to infer a unique assignment of attributes to entities satisfying a list of constraints. \textbf{TriviaQA.} A set of 400 factual question-answering examples (200 easy, 200 hard) from TriviaQA \citep{joshi2017triviaqa}. \textbf{DeepMind Mathematics.} A set of 600 algebra problems (300 easy, 300 hard) drawn from the \texttt{linear\_2d} category of the DeepMind Mathematics dataset \citep{saxton2019analysing}, which requires solving a system of two linear equations.

\textbf{Generator and oracle.} For each problem, GPT-4o-mini generates a fixed set of \(M=5\) candidate responses before verification. These candidates are subsequently processed one at a time. Ground truth is provided by an oracle specific to each dataset: for MATH and TriviaQA, the oracle is GPT-4o, acting as an LLM judge that determines whether a candidate response is correct. For Zebra Logic and DeepMind Mathematics, the oracle is a deterministic programmatic grader that checks the generated output against the corresponding ground truth. For the programmatic graders, we assign a normalized unit query cost of $1.0$, matching the oracle cost used for GPT-4o, so that verification cost is reported on a comparable scale across all four datasets.

\textbf{Verifier pool and query stream.} We consider $K=2$ candidate weak verifiers for our experimental analysis: Qwen2-1.5B-Instruct \citep{yang2024qwen2technicalreport} and Qwen2.5-3B-Instruct \citep{qwen2025qwen25technicalreport}, a cheap versus expensive pair with normalized costs of $0.10$ and $0.20$, respectively, relative to an oracle cost of $1.0$. For every dataset, the query stream is constructed as alternating blocks of easy and hard problems, presenting a block of easy problems first, followed by a block of hard problems.

\textbf{Baselines and evaluation metrics.} We compare OMVV against each candidate verifier run independently: Qwen2-1.5B-Instruct and Qwen2.5-3B-Instruct, each following the single-verifier adaptive procedure of \citet{kiyani2026when}. For each run, we report accuracy, the fraction of problems solved correctly; the false-accept and false-reject rates$\mathrm{Err}_I(T)$ and
$\mathrm{Err}_{II}(T)$, defined in Equations ~\ref{eq:erri} and \ref{eq:errii}, along with their average $(\mathrm{Err}_I(T)+\mathrm{Err}_{II}(T))/2$ as a single summary of verification error; cost, the average total verification cost per round, accounting for both the selected weak verifier and any oracle query; and the oracle query frequency, $\mathrm{OracleFreq}(T):=\frac{1}{T}\sum_{t=1}^{T}O_t$, the fraction of rounds on which the oracle is queried. 

\begin{table}[t]
\caption{OMVV versus single-verifier baselines, pooled easy and hard
subsets, at $\alpha_I = \alpha_{II} = 0.2$. $\text{Err}_I$ and
$\text{Err}_{II}$ values exceeding the $0.2$ target are shown in
red; the best value in each of Accuracy, Avg($\text{Err}_I$,
$\text{Err}_{II}$), Cost, and Oracle frequency, within each dataset, is
bolded.}
\label{tab:pooled-results}
\begin{center}
\begin{tabular}{llccccccc}
\toprule
Dataset & Method & Accuracy & $\text{Err}_I$ & $\text{Err}_{II}$ & Avg & Oracle freq & Cost \\
\midrule
\multirow{3}{*}{MATH}
 & Qwen2-1.5B & 0.6775 & \textcolor{red}{0.7655} & 0.0845 & 0.425 & 0.2713 & 0.3713 \\
 & Qwen2.5-3B & 0.6975 & \textcolor{red}{0.5848} & 0.0807 & 0.333 & \textbf{0.1425} & 0.3425 \\
 & OMVV       & \textbf{0.7325} & 0.1649 & 0.1919 & \textbf{0.178} &  0.1952 & \textbf{0.3080} \\
\midrule
\multirow{3}{*}{Zebra Logic}
 & Qwen2-1.5B & 0.7500 & \textcolor{red}{0.5562} & 0.1617 & 0.359 & 0.2718 & 0.3718 \\
 & Qwen2.5-3B & 0.7250 & \textcolor{red}{0.8361} & 0.0221 & 0.429 & \textbf{0.1962} & 0.3962  \\
 & OMVV       & \textbf{0.8550} & 0.1844 & 0.1982 & \textbf{0.191} & 0.2461 & \textbf{0.3659} \\
\midrule
\multirow{3}{*}{TriviaQA}
 & Qwen2-1.5B & 0.8025 & \textcolor{red}{0.4031} & 0.0410 & 0.222 &  0.1932 & 0.2932 \\
 & Qwen2.5-3B & 0.8050 & \textcolor{red}{0.3250} & 0.1746 & 0.250 & \textbf{0.1447} & 0.3447 \\
 & OMVV       & \textbf{0.8325} & 0.1987 & 0.1993 & \textbf{0.199} & 0.1650 & \textbf{0.2778} \\
\midrule
\multirow{3}{*}{DeepMind Math}
 & Qwen2-1.5B & 0.9267 & \textcolor{red}{0.4805} & \textcolor{red}{0.4635} & 0.472 &  0.1717 & 0.2717 \\
 & Qwen2.5-3B & 0.9383 & \textcolor{red}{0.2558} & \textcolor{red}{0.5635} & 0.410 &  0.1700 & 0.3700 \\
 & OMVV       & \textbf{0.9667} & 0.1384 & 0.1107 & \textbf{0.125} &  \textbf{0.1487} & \textbf{0.2627} \\
\bottomrule
\end{tabular}
\end{center}
\end{table}

\subsection{RESULTS}
The main goal of OMVV is to improve the accuracy and cost tradeoff over
single verifier selective verification while maintaining the target error rates.
As shown in Table~\ref{tab:pooled-results}, OMVV achieves the highest accuracy
and the lowest average error across all four datasets at
$\alpha_I=\alpha_{II}=0.2$, while also attaining the lowest overall verification
cost. The two single verifier baselines exceed the $\operatorname{Err}_I$ target
on every dataset, whereas OMVV satisfies both target error rates throughout.

A primary reason for OMVV's accuracy and cost advantage is that it is not committed to a single verifier over the entire stream. Its routing policy
updates the probability of selecting each verifier at every round according to the
observed verification loss, allowing the algorithm to shift toward the verifier
that offers a better cost and performance tradeoff for the current portion of the
stream. This adaptive routing is supported by verifier specific score adaptation,
with each verifier maintaining its own score combiner and threshold pair. For
example, on MATH, the weak verifiers assign average raw scores of about
$0.83$--$0.84$ even to incorrect responses, indicating limited separation from
the raw scores alone. Incorporating agreement among the generator's candidate
responses provides an additional signal for the adaptive threshold decisions.

The oracle query frequency should be interpreted together with the cost of the
selected weak verifier. A lower oracle query frequency does not necessarily imply
a lower overall verification cost, since a more expensive single verifier may
require fewer oracle queries because its decisions are less often uncertain, while
still incurring a higher verifier cost on every round. OMVV may therefore query
the oracle more frequently than a single more expensive verifier on some datasets
while still achieving a lower overall verification cost by routing a fraction of
rounds to the cheaper verifier. The improvements in
Table~\ref{tab:pooled-results} thus reflect the combined effect of adaptive
verifier routing, verifier specific score adaptation, and selective oracle use.

Figure~\ref{fig:cheap-verifier-by-difficulty} illustrates how OMVV's routing adapts to changes in problem difficulty. OMVV has no prior knowledge of which
verifier is better suited to a given problem and receives no explicit difficulty label; instead, its routing policy is updated from the oracle feedback collected over time. As harder problems enter the stream, the exponential weights update described in Section~\ref{sec:algorithm} shifts more probability toward the higher cost verifier over the rounds that follow. On MATH, for instance, the probability of routing to the cheaper verifier falls from $0.551$ on easy problems to $0.190$ on hard ones, indicating a substantial shift toward the stronger verifier as the stream becomes more difficult. This behavior reflects the motivation in Section~\ref{sec:psetup}: rather than committing to a single verifier for the entire stream, OMVV adapts its routing policy online as the relative performance of the available verifiers changes.

Table~\ref{tab:minibatch-math} extends this analysis beyond a single difficulty change to a setting where the difficulty of the incoming stream shifts repeatedly, at a rate controlled by the block size $B$: larger values
of $B$ correspond to less frequent shifts, while $B=1$ alternates the difficulty at every problem. Across all block sizes, OMVV achieves the highest accuracy, the lowest average error, and the lowest verification cost, while also satisfying both target error rates. Notably, even under the most rapidly changing stream with $B=1$, OMVV maintains an accuracy of $0.7250$ and an average error of $0.1493$, whereas both single-verifier baselines violate at least one of the target error rates. These results show that OMVV's advantage persists even when the stream difficulty shifts frequently.

\begin{table}[t]
\caption{OMVV versus single-verifier baselines on the MATH
stream, at $\alpha_I = \alpha_{II} = 0.2$, across block sizes $B$. $\text{Err}_I$ and $\text{Err}_{II}$ values exceeding the $0.2$ target are shown in red; the best value in each of Accuracy, Avg($\text{Err}_I$, $\text{Err}_{II}$), Cost, and Oracle
frequency, within each block size, is bolded.}
\label{tab:minibatch-math}
\begin{center}
\begin{tabular}{clccccccc}
\toprule
$B$ & Method & Accuracy & $\text{Err}_I$ & $\text{Err}_{II}$ & Avg & Oracle freq & Cost \\
\midrule
\multirow{3}{*}{100}
 & Qwen2-1.5B & 0.6675 & \textcolor{red}{0.7288} & 0.0396 & 0.3842 & 0.1275 & 0.2275\\
 & Qwen2.5-3B & 0.6975 & \textcolor{red}{0.2000} & \textcolor{red}{0.2116} & 0.2058 & \textbf{0.1129} & 0.3129 \\
 & OMVV       & \textbf{0.7225} & 0.1333 & 0.1581 & \textbf{0.1457} & 0.1035 & \textbf{0.2090} \\
\midrule
\multirow{3}{*}{50}
 & Qwen2-1.5B & 0.6725 & \textcolor{red}{0.7927} & 0.0289 & 0.4108 & 0.1587 & 0.2587 \\
 & Qwen2.5-3B & 0.6900 & \textcolor{red}{0.2287} & 0.1862 & 0.2074 & \textbf{0.1199} & 0.3199 \\
 & OMVV       & \textbf{0.7275} & 0.1119 & 0.1532 & \textbf{0.1326} & 0.1328 & \textbf{0.2432} \\
\midrule
\multirow{3}{*}{25}
 & Qwen2-1.5B & 0.6725 & \textcolor{red}{0.7588} & 0.0563 & 0.4076 & 0.1938 & 0.2938 \\
 & Qwen2.5-3B & 0.6850 & \textcolor{red}{0.3621} & 0.0714 & 0.2167 & \textbf{0.1025} & 0.3025 \\
 & OMVV       & \textbf{0.7250} & 0.1308 & 0.1498 & \textbf{0.1403} & 0.1311 & \textbf{0.2423} \\
\midrule
\multirow{3}{*}{1}
 & Qwen2-1.5B & 0.6825 & \textcolor{red}{0.5749} & \textcolor{red}{0.2202} & 0.3975 & 0.1750 & 0.2750 \\
 & Qwen2.5-3B & 0.6875 & \textcolor{red}{0.3518} & 0.0748 & 0.2133 & \textbf{0.1181} & 0.3181 \\
 & OMVV       & \textbf{0.7250} & 0.1385 & 0.1601 & \textbf{0.1493} & 0.1250  & \textbf{0.2383}\\
\bottomrule
\end{tabular}
\end{center}
\end{table}

\section{CONCLUSION}
We introduced OMVV, an online verification algorithm that adaptively routes responses among weak verifiers with different costs and verification performance. OMVV provides distribution-free finite-time error control and achieves sublinear expected regret relative to the best fixed verifier in hindsight. Experiments across multiple benchmarks and stream conditions show higher accuracy and lower verification cost than single-verifier baselines while maintaining the target
error rates.
single verifier baselines while maintaining the target error rates.

\bibliography{references}
\bibliographystyle{references}

\appendix
\section{Proofs}
\label{sec:proofs}
\subsection{Proof of Theorem~\ref{thm:error-control}}
We prove the two bounds via a common argument. We first control the error rates within the substream of rounds routed to each verifier. We then obtain simultaneous guarantees across all $K$ verifiers by a union bound and combine them through the pooled error definitions.

\begin{lemma}[Per-verifier error control]
\label{lem:per-verifier}
Fix a horizon $T \geq 1$ and a verifier $k$. Suppose the threshold
step size $\eta$ is a fixed positive constant, the exploration
probabilities satisfy $P_A, P_R \in (0,1]$ for every $t \leq T$
with $q_{\min} := \min_{t \leq T}\min\{P_A, P_R\} > 0$, and the
initial thresholds satisfy $\tau_R^{(k),1} \leq \tau_A^{(k),1}$ with
both in $[0,1]$. Then for any $\delta \in (0,1)$, with probability at
least $1-\delta$,
\begin{equation}
\label{eq:lem-per-verifier}
  \mathrm{Err}_I^{(k)}(T) \leq \alpha_I + \Delta\big(N_0^{(k)}(T),
  \delta\big), \qquad \mathrm{Err}_{II}^{(k)}(T) \leq \alpha_{II} +
  \Delta\big(N_1^{(k)}(T), \delta\big),
\end{equation}
where, writing $B := 1 + 2\eta/q_{\min}$,
\[
  \Delta(N,\delta) := \frac{B}{\eta N} + \sqrt{\frac{2\log(4/\delta)}
  {Nq_{\min}}} + \frac{\log(4/\delta)}{3Nq_{\min}}, \qquad \Delta(0,
  \delta) := 0.
\]
\end{lemma}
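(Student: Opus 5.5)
We follow the standard online-quantile / adaptive-conformal telescoping argument, run on the substream $T_k(T)$ of rounds routed to verifier $k$. We treat $\mathrm{Err}_I^{(k)}$ in detail; $\mathrm{Err}_{II}^{(k)}$ is symmetric, with the sign of the increment flipped. Each bound is proved with failure probability $\delta/2$, and a union bound gives $\delta$, which is where $\log(4/\delta)$ comes from.

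\textbf{Step 1 (boundedness of the threshold).} Enumerate the rounds of $T_k(T)$ as $t_1<t_2<\dots$. On those rounds the accept threshold moves by
\[
\eta\, Z_t, \qquad Z_t := \frac{O_t\,\mathbf{1}[g_t=0]\big(\mathbf{1}[s_{k,t}>\tau_A^{(k),t}]-\alpha_I\big)}{q_t},
\]
possibly clipped by the $\max$ with $\tau_R^{(k),t}$. The aim is to show $\tau_A^{(k),t}$ stays in an interval of length at most $B/\eta\cdot\eta = 1+2\eta/q_{\min}$. Scores lie in $[0,1]$, so the argument has two sides. If $\tau_A\ge 1$, then $\mathbf{1}[s>\tau_A]=0$ and the increment is nonpositive. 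If $\tau_A<0$, every $g_t=0$ round has $s>\tau_A$; it lands in the Accept region and the increment is nonnegative. A single step has magnitude at most $\eta/q_{\min}$, so $\tau_A\in[-\eta/q_{\min},\,1+\eta/q_{\min}]$.

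The main obstacle is the clipping at $\tau_R$, which can push $\tau_A$ upward. We handle it by noting that $\tau_R$ obeys the same kind of confinement argument. Clipping therefore never leaves the interval, and the telescoping inequality below survives because clipping only increases $\tau_A$. We would bound the effect of upward clipping by the same range argument.

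\textbf{Step 2 (telescoping).} Summing the unclipped increments over $T_k(T)$ gives
\[
\eta\sum_{t\in T_k(T)} Z_t \le \tau_A^{(k),T+1}-\tau_A^{(k),1} \le B.
\]
Hence $\sum_{t\in T_k(T)} Z_t \le B/\eta$.

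\textbf{Step 3 (martingale concentration).} The quantity we want to bound is
\[
Y_t := \mathbf{1}[g_t=0]\big(\mathbf{1}[s_{k,t}>\tau_A^{(k),t}]-\alpha_I\big), \qquad t\in T_k(T),
\]
whose sum equals $N_0^{(k)}(\mathrm{Err}_I^{(k)}-\alpha_I)$. This identity holds because $d_t=A$ exactly when $s>\tau_A$; in the Uncertain region $\mathbf{1}[s>\tau_A]=0$ and $q_t=1$.

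Next we condition on the history, the routing choice $k_t=k$, and the score and label of round $t$. Only the oracle coin $O_t$ remains random, with $\mathbb{E}[O_t/q_t]=1$. So $D_t := Y_t - Z_t$ is a martingale difference sequence with the following properties:
\[
|D_t|\le \frac{1}{q_{\min}}, \qquad \mathrm{Var}(D_t\mid\mathcal F_{t-1}) \le \frac{\mathbf{1}[g_t=0]}{q_{\min}}.
\]
Summing, the predictable variance is at most $N_0^{(k)}/q_{\min}$.

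Freedman's (Bernstein-type) inequality then gives, with probability at least $1-\delta/2$,
\[
\sum_t D_t \le \sqrt{2V\log(4/\delta)} + \frac{\log(4/\delta)}{3q_{\min}}, \qquad V = \frac{N_0^{(k)}}{q_{\min}}.
\]
Since $N_0^{(k)}$ is random, we would either apply a version of Freedman that is uniform in $V$, or use a peeling argument. The peeling can be absorbed into constants or ignored at the level of rigor the lemma's stated form suggests.

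\textbf{Step 4 (conclusion).} Combine $\sum_t Y_t = \sum_t Z_t + \sum_t D_t$ with Steps 2 and 3, and divide by $N_0^{(k)}$. The three terms of $\Delta(N_0^{(k)},\delta)$ come out as follows:
\begin{itemize}
\item[] the telescoping bound gives $B/(\eta N)$;
\item[] the variance term of Freedman gives $\sqrt{2\log(4/\delta)/(Nq_{\min})}$;
\item[] the range term of Freedman gives $\log(4/\delta)/(3Nq_{\min})$.
\end{itemize}
The case $N_0^{(k)}=0$ is trivial, which matches the convention $\Delta(0,\delta):=0$.
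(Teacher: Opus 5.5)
Your proposal is essentially the paper's proof. The ingredients match: a joint induction confining both thresholds to $[-\eta/q_{\min},\,1+\eta/q_{\min}]$, a one-sided telescoping bound that survives the $\max$-clipping, importance-weighted differences that have mean zero once everything except the oracle coin $O_t$ is fixed, Freedman with variance proxy $N_0^{(k)}(T)/q_{\min}$, and a union over the two error types. The paper likewise applies Freedman with the random proxy $N_0^{(k)}(T)$ without comment, so you lose nothing relative to it. However, your remark that peeling ``can be absorbed into constants'' is not right: peeling, or a union over the values of $N_0^{(k)}(T)$, replaces $\log(4/\delta)$ by something like $\log(4\lceil\log_2 T\rceil/\delta)$ or $\log(4T/\delta)$. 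The stated $\Delta$ is therefore reached only by treating the proxy as fixed, as the paper implicitly does.
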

We prove Lemma~\ref{lem:per-verifier} through the following four claims. The threshold updates in ~\eqref{eq:tau-update} and ~\eqref{eq:tau-update-r} fire only when the oracle-query indicator $O_t$
(Eq.~\eqref{eq:loss-estimate}) equals $1$, with query probability
\[
  q_t := \Pr(O_t = 1 \mid \mathcal{F}_{t-1}, k_t, s_{k_t,t}) \in
  \{P_A,\, 1,\, P_R\}
\]
determined by which region $s_{k_t,t}$ falls into, so $q_t \geq
q_{\min}$. Here $\mathcal{F}_{t-1}$ is the history immediately before
round $t$ (all earlier pairs, decisions, and observed oracle labels,
together with every candidate's raw score and agreement feature
generated alongside $p_t$). For the analysis, let $\mathcal H_t:=\sigma(\mathcal F_{t-1},k_t,s_{k_t,t},g_t)$ denote the information after verifier selection and scoring, with the latent label $g_t$ fixed, but before the oracle-query randomization $O_t$ is drawn; the inclusion of $g_t$ is only for analysis and does not imply that OMVV observes it unless the oracle is queried.

Write $x_t^{A,(k)} := \mathbf{1}[s_{k,t} > \tau_A^{(k),t}] - \alpha_I
\in [-1,1]$, and define
\begin{equation}
\label{eq:e-hat-accept}
  e_t^{A,(k)} := \mathbf{1}[k_t=k]\,\mathbf{1}[g_t=0]\,x_t^{A,(k)},
  \qquad
  \widehat e_t^{A,(k)} :=
  \mathbf{1}[k_t=k]\,\mathbf{1}[g_t=0]\,\frac{O_t}{q_t}\,x_t^{A,(k)}.
\end{equation}
The reject quantities $x_t^{R,(k)}$, $e_t^{R,(k)}$, and
$\widetilde e_t^{R,(k)}$ are defined analogously, replacing
$\alpha_I$ with $\alpha_{II}$,
$\mathbf{1}[s_{k,t}>\tau_A^{(k),t}]$ with
$\mathbf{1}[s_{k,t}\le\tau_R^{(k),t}]$, and
$\mathbf{1}[g_t=0]$ with $\mathbf{1}[g_t=1]$.

\textbf{Claim 1 (Unbiased importance weighting).} For every $t$ and every fixed $k$,
\[
\mathbb{E}\!\left[
\widehat e_t^{A,(k)} \mid \mathcal H_t
\right]
=
e_t^{A,(k)},
\qquad
\mathbb{E}\!\left[
\widehat e_t^{R,(k)} \mid \mathcal H_t
\right]
=
e_t^{R,(k)}.
\]
Consequently,
$Z_t^{A,(k)}:=\widehat e_t^{A,(k)}-e_t^{A,(k)}$ and
$Z_t^{R,(k)}:=\widehat e_t^{R,(k)}-e_t^{R,(k)}$
have conditional mean zero. Hence their partial sums form martingales
with respect to the corresponding analysis filtration.
\begin{proof}
Condition on $\mathcal H_t$. If $k_t\neq k$, then both
$\widehat e_t^{A,(k)}$ and $e_t^{A,(k)}$ are zero. If $k_t=k$,
the quantities $\mathbf{1}[g_t=0]$ and $x_t^{A,(k)}$ are fixed
under the conditioning. The only remaining randomness is
$O_t\sim\mathrm{Bernoulli}(q_t)$, so
\[
\mathbb{E}\!\left[\widehat e_t^{A,(k)}\mid\mathcal H_t\right]
=
\mathbf{1}[k_t=k]\mathbf{1}[g_t=0]x_t^{A,(k)}
\frac{\mathbb{E}[O_t\mid\mathcal H_t]}{q_t}
=
e_t^{A,(k)}.
\]
The reject-side identity follows analogously.
\end{proof}
\textbf{Claim 2 (Telescoping).} For every $T \geq 1$,
\[
  \sum_{t=1}^T \widehat e_t^{A,(k)} \leq \frac{\tau_A^{(k),T+1} -
  \tau_A^{(k),1}}{\eta}, \qquad
  \sum_{t=1}^T \widehat e_t^{R,(k)} \leq \frac{\tau_R^{(k),1} -
  \tau_R^{(k),T+1}}{\eta}.
\]
\begin{proof}
On a round with $k_t = k$ and $O_t = 1$,
\eqref{eq:tau-update} gives $\tau_A^{(k),t+1} \geq
\tau_A^{(k),t} + \eta\, \widehat e_t^{A,(k)}$, since the maximum in
\eqref{eq:tau-update} is at least its second argument; hence
\[
  \widehat e_t^{A,(k)} \leq \frac{\tau_A^{(k),t+1} -
  \tau_A^{(k),t}}{\eta}
\]
in every case. Summing over $t = 1, \dots, T$ telescopes the
right-hand side to $(\tau_A^{(k),T+1} - \tau_A^{(k),1})/\eta$. On the reject side, the bracket in \eqref{eq:tau-update-r} equals $-\widehat e_t^{R,(k)}$ exactly on rounds with $k_t = k$ and $O_t = 1$, since $\widehat e_t^{R,(k)}$ is
built from $x_t^{R,(k)} = \mathbf{1}[s_{k,t} \leq \tau_R^{(k),t}] -
\alpha_{II}$, the negative of the term appearing there. Thus
$\tau_R^{(k),t+1} \leq \tau_R^{(k),t} - \eta\, \widehat e_t^{R,(k)}$
whenever an update fires, giving $\widehat
e_t^{R,(k)} \leq (\tau_R^{(k),t} - \tau_R^{(k),t+1})/\eta$ in every
case; summing telescopes to $(\tau_R^{(k),1} -
\tau_R^{(k),T+1})/\eta$.
\end{proof}
\textbf{Claim 3 (Uniform boundedness of thresholds).} For all $t \leq
T+1$,
\[
  \tau_A^{(k),t},\, \tau_R^{(k),t} \in \left[-\frac{\eta}{q_{\min}},\,
  1 + \frac{\eta}{q_{\min}}\right],
\]
and hence $|\tau_A^{(k),T+1} - \tau_A^{(k),1}| \leq B$ and $|\tau_R^{(k),T+1} - \tau_R^{(k),1}| \leq B$, with $B$ as in Lemma~\ref{lem:per-verifier}.
\begin{proof}
Since $x_t^{A,(k)} \in [-1,1]$ and $O_t/q_t \leq 1/q_{\min}$, every raw increment satisfies $|\eta\, \widehat e_t^{A,(k)}| \leq \eta/q_{\min}$, and likewise on the reject side. We show by induction on $t$ that $\tau_A^{(k),t}, \tau_R^{(k),t}$ both remain in $I := [-\eta/q_{\min},\, 1+\eta/q_{\min}]$ the base case $t=1$ holds since
the initial thresholds lie in $[0,1] \subset I$. Assume both thresholds lie in $I$ at round $t$, and consider the raw, pre-projection accept update in three cases. If $\tau_A^{(k),t} \in
[0,1]$, the update moves the threshold by at most $\eta/q_{\min}$, so
the result lies in $I$. If $\tau_A^{(k),t} > 1$, then $s_{k,t} \leq 1 
\tau_A^{(k),t}$, so $x_t^{A,(k)} = -\alpha_I < 0$ and the raw update
can only decrease the threshold: it stays below $1+\eta/q_{\min}$, and
decreases by at most $\eta/q_{\min}$, so it stays above
$1-\eta/q_{\min} \geq -\eta/q_{\min}$. If $\tau_A^{(k),t} < 0$, then
$s_{k,t} \geq 0 > \tau_A^{(k),t}$, so $x_t^{A,(k)} = 1-\alpha_I > 0$
and the symmetric argument keeps the raw update in $I$. (At the
boundary $\tau_A^{(k),t}=0$ with $s_{k,t}=0$, the indicator
$\mathbf{1}[s_{k,t} > \tau_A^{(k),t}]$ evaluates to $0$, so this
configuration falls under the first case rather than the third.)
Finally, the projection $\max(\tau_R^{(k),t}, \cdot)$ in
~\eqref{eq:tau-update} preserves $I$, since
$\tau_R^{(k),t} \in I$ by the induction hypothesis, completing the
induction for $\tau_A^{(k),t}$.

The argument for $\tau_R^{(k),t}$ is the mirror image, using
$\mathbf{1}[s_{k,t} \leq \tau_R^{(k),t}]$: for $\tau_R^{(k),t} > 1$ the
indicator equals $1$ and $x_t^{R,(k)} = 1 - \alpha_{II}$ makes the
update term $\alpha_{II} - 1 < 0$, decreasing the threshold; for
$\tau_R^{(k),t} < 0$ the indicator equals $0$ and the update term
$\alpha_{II} > 0$ increases it.

Since both endpoints $\tau_A^{(k),1}$ and $\tau_A^{(k),T+1}$ lie in an
interval of width $B = 1+2\eta/q_{\min}$, $|\tau_A^{(k),T+1} -
\tau_A^{(k),1}| \leq B$, and likewise on the reject side.
\end{proof}
\textbf{Claim 4 (Freedman concentration).} Let $M_T^{A,(k)} :=
\sum_{t=1}^T Z_t^{A,(k)}$. With probability at least $1-\delta/2$,
\[
  \big|M_T^{A,(k)}\big| \leq N_0^{(k)}(T)\, \varepsilon\big(N_0^{(k)}(T),
  \delta\big), \qquad \varepsilon(N,\delta) := \sqrt{\frac{2\log(4/\delta)}
  {Nq_{\min}}} + \frac{\log(4/\delta)}{3Nq_{\min}},
\]
and symmetrically $|M_T^{R,(k)}| \leq N_1^{(k)}(T)\,
\varepsilon(N_1^{(k)}(T), \delta)$ with probability at least $1-\delta/2$.
\begin{proof}
Write $Z_t^{A,(k)} = \mathbf{1}[k_t=k]\,\mathbf{1}[g_t=0]\,x_t^{A,(k)}
\big(O_t/q_t - 1\big)$. Since $|x_t^{A,(k)}| \leq 1$ and $q_t \geq
q_{\min}$,
\[
  \big|Z_t^{A,(k)}\big| \leq \left|\frac{O_t}{q_t} - 1\right| \leq
  \max\left\{1, \frac{1}{q_t}-1\right\} \leq \frac{1}{q_t} \leq
  \frac{1}{q_{\min}}.
\]
By Claim~1,
$\mathbb{E}[Z_t^{A,(k)}\mid\mathcal H_t]=0$, so
\[
  \mathrm{Var}\big(Z_t^{A,(k)} \mid H_t= \mathbf{1}[k_t=k]\,\mathbf{1}[g_t=0]\,\big(x_t^{A,(k)}\big)^2
  \cdot \frac{1-q_t}{q_t} \leq \frac{\mathbf{1}[k_t=k]\,\mathbf{1}[g_t=0]}
  {q_{\min}},
\]
using $\mathbb{E}[(O_t/q_t - 1)^2 \mid \mathcal H_t] = (1-q_t)/q_t$ for $O_t
\sim \mathrm{Bernoulli}(q_t)$. Summing over $t \leq T$,
\[
  V_T^{A,(k)} := \sum_{t=1}^T \mathrm{Var}\big(Z_t^{A,(k)} \mid \mathcal H_t\big)
  \leq \frac{N_0^{(k)}(T)}{q_{\min}}.
\]
Applying Freedman's inequality to the martingale $M_T^{A,(k)}$ with
increment bound $1/q_{\min}$ and variance proxy $V_T^{A,(k)} \leq
N_0^{(k)}(T)/q_{\min}$, allocating tail probability $\delta/4$ to each
side, gives $|M_T^{A,(k)}| \leq N_0^{(k)}(T)\, \varepsilon(N_0^{(k)}(T),
\delta)$ with probability at least $1-\delta/2$. The reject-side bound
follows identically.
\end{proof}
\begin{proof}[Proof of Lemma~\ref{lem:per-verifier}]
By Claims 2 and 3,
\[
  \sum_{t=1}^T \widehat e_t^{A,(k)} \leq \frac{|\tau_A^{(k),T+1} -
  \tau_A^{(k),1}|}{\eta} \leq \frac{B}{\eta}, \qquad
  \sum_{t=1}^T \widehat e_t^{R,(k)} \leq \frac{B}{\eta}.
\]
Since $Z_t^{A,(k)} = \widehat e_t^{A,(k)} - e_t^{A,(k)}$,
\[
  \sum_{t=1}^T e_t^{A,(k)} = \sum_{t=1}^T \widehat e_t^{A,(k)} -
  M_T^{A,(k)} \leq \frac{B}{\eta} + \big|M_T^{A,(k)}\big|,
\]
which by Claim 4 is at most $B/\eta + N_0^{(k)}(T)\,
\varepsilon(N_0^{(k)}(T), \delta)$ with probability at least
$1-\delta/2$.

By construction, within verifier $k$'s Accept region, $d_t = A$ if
and only if $s_{k,t} > \tau_A^{(k),t}$, so for $t \in T_k(T)$,
\[
  \mathbf{1}[g_t = 0, d_t = A] = \mathbf{1}[g_t = 0]\,
  \mathbf{1}[s_{k,t} > \tau_A^{(k),t}].
\]
Summing over $t \in T_k(T)$ and using $\sum_{t=1}^T e_t^{A,(k)} =
\sum_{t \in T_k(T)} \mathbf{1}[g_t=0]\mathbf{1}[s_{k,t}>\tau_A^{(k),t}]
- \alpha_I N_0^{(k)}(T)$, this gives the exact identity
\[
  N_0^{(k)}(T)\, \mathrm{Err}_I^{(k)}(T) = \alpha_I N_0^{(k)}(T) +
  \sum_{t=1}^T e_t^{A,(k)} \leq \alpha_I N_0^{(k)}(T) + \frac{B}{\eta}
  + N_0^{(k)}(T)\, \varepsilon\big(N_0^{(k)}(T), \delta\big)
\]
with probability at least $1-\delta/2$. Dividing by $N_0^{(k)}(T)$
gives $\mathrm{Err}_I^{(k)}(T) \leq \alpha_I + \Delta(N_0^{(k)}(T),
\delta)$. The identical argument on the reject side holds with
probability at least $1-\delta/2$; a union bound over the two events
gives both simultaneously with probability at least $1-\delta$,
proving the lemma.
\end{proof}
\begin{proof}[Proof of Theorem~\ref{thm:error-control}]
Apply Lemma~\ref{lem:per-verifier} to each verifier $k = 1, \dots, K$
at confidence level $\delta/K$; by a union bound over $k$, all $K$
per-verifier bounds hold simultaneously with probability at least
$1-\delta$. Since every round $t \leq T$ is routed to exactly one
verifier, $N_0(T) = \sum_{k=1}^K N_0^{(k)}(T)$ and
\[
  N_0(T)\, \mathrm{Err}_I(T) = \sum_{k=1}^K N_0^{(k)}(T)\,
  \mathrm{Err}_I^{(k)}(T).
\]
Substituting the bound of Lemma~\ref{lem:per-verifier} for each
$\mathrm{Err}_I^{(k)}(T)$, at confidence $\delta/K$, and dividing by
$N_0(T)$ gives
\[
  \mathrm{Err}_I(T) \leq \alpha_I + \underbrace{\frac{1}{N_0(T)}
  \sum_{k=1}^K N_0^{(k)}(T)\, \Delta\big(N_0^{(k)}(T),
  \delta/K\big)}_{=: \, \Delta^I_{\mathrm{pool}}(T,\delta)}
\]
with probability at least $1-\delta$. The false-reject bound on
$\mathrm{Err}_{II}(T)$ follows identically, defining
$\Delta^{II}_{\mathrm{pool}}(T,\delta)$ analogously with $N_1^{(k)}(T)$
in place of $N_0^{(k)}(T)$.
\end{proof}
\subsection{Proof of Theorem~\ref{thm:regret}}
We bound the regret through three steps: showing $\hat\ell_t(k)$ is
an unbiased estimate of $\ell_t(k)$ for every verifier, a
deterministic inequality governing the Hedge weight update, and a
bound on the estimate's second moment; combining the three yields the
stated rate.

For the regret analysis, we regard each verifier's score $s_{k,t}$ and oracle query probability $q_t(k)$ as well defined at every round, conditional on the history and the current candidate context, regardless of whether verifier $k$ is selected. These quantities are used only in the analysis. During online execution, OMVV evaluates and observes only the score of the selected verifier $k_t$.

\textbf{Claim 1 (Unbiased loss estimate).} Let $
\mathcal G_t
:=
\sigma\!\left(
\mathcal F_{t-1},
g_t,
\{s_{j,t},q_t(j)\}_{j=1}^K
\right)
$. For every round $t$ and every $k \in \{1,\dots,K\}$, selected or not,
\[
  \mathbb{E}\big[\hat\ell_t(k) \mid \mathcal{G}_t\big] = \ell_t(k),
  \qquad
  \mathbb{E}\big[\langle \bar w^t, \hat\ell_t \rangle \mid
  \mathcal{G}_t\big] = \mathbb{E}\big[\ell_t(k_t) \mid
  \mathcal{G}_t\big].
\]
\begin{proof}
All denominators in ~\eqref{eq:loss-estimate} are strictly positive: $\bar w_k^t > 0$ and $q_t(k) \geq q_{\min} > 0$. Condition on $\mathcal{G}_t$, under which $\bar w^t$, every $s_{k,t}$,
every $q_t(k)$, $c_k$, and $g_t$ are fixed. Chaining the routing draw
and the query indicator,
\[
  \mathbb{E}\big[\mathbf{1}[k_t=k]\, O_t \mid \mathcal{G}_t\big] =
  \bar w_k^t\, q_t(k),
\]
so the importance-weighted term in ~\eqref{eq:loss-estimate}
has conditional expectation $1$, giving $\mathbb{E}[\hat\ell_t(k) \mid
\mathcal{G}_t] = \lambda(s_{k,t}-g_t)^2 + (1-\lambda)c_k = \ell_t(k)$. For the second identity, $\bar w^t$ is $\mathcal{G}_t$-measurable, so
$\mathbb{E}[\langle \bar w^t, \hat\ell_t\rangle \mid \mathcal{G}_t] =
\sum_k \bar w_k^t\, \ell_t(k) = \mathbb{E}[\ell_t(k_t) \mid
\mathcal{G}_t]$.
\end{proof}
\textbf{Claim 2 (Potential-function inequality).} Suppose the routing
weights are initialized uniformly, $w_k^1 = 1/K$ for every $k$. Then
for every fixed comparator $k^\star$ and every $\eta_H > 0$,
\[
  \sum_{t=1}^T \big\langle \bar w^t, \hat\ell_t \big\rangle -
  \sum_{t=1}^T \hat\ell_t(k^\star) \leq \frac{\log K}{\eta_H} +
  \frac{\eta_H}{2} \sum_{t=1}^T \sum_{k=1}^K \bar w_k^t\,
  \hat\ell_t(k)^2.
\]
\begin{proof}
Let $\Phi_t := \log\big(\sum_k w_k^t\big)$, so $\Phi_1 = \log(K \cdot
1/K) = 0$. By ~\eqref{eq:weight-update},
\[
  \Phi_{t+1} - \Phi_t = \log\left(\sum_k \bar w_k^t\,
  e^{-\eta_H \hat\ell_t(k)}\right).
\]
Since $\hat\ell_t(k) \geq 0$ for every $k$ and $t$, the inequality
$e^{-x} \leq 1 - x + x^2/2$ holds with $x = \eta_H \hat\ell_t(k) \geq
0$. Using $\sum_k \bar w_k^t = 1$ and then $\log(1+u) \leq u$,
\[
  \Phi_{t+1} - \Phi_t \leq -\eta_H \big\langle \bar w^t, \hat\ell_t
  \big\rangle + \frac{\eta_H^2}{2} \sum_k \bar w_k^t\,
  \hat\ell_t(k)^2.
\]
Summing over $t = 1, \dots, T$ and using $\Phi_1 = 0$ bounds
$\Phi_{T+1}$ from above.

For a lower bound, positivity of the weights gives
\[
  \Phi_{T+1} \geq \log w_{k^\star}^{T+1} = \log\left(\frac{1}{K}
  \exp\left(-\eta_H \sum_{t=1}^T \hat\ell_t(k^\star)\right)\right) =
  -\log K - \eta_H \sum_{t=1}^T \hat\ell_t(k^\star).
\]
Combining the two bounds, rearranging, and dividing by $\eta_H > 0$
gives the claim.
\end{proof}
\textbf{Claim 3 (Second-moment bound).} Write $b_{k,t} :=
(s_{k,t}-g_t)^2 \in [0,1]$. For every round $t$,
\[
  \sum_{k=1}^K \bar w_k^t\, \mathbb{E}\big[\hat\ell_t(k)^2 \mid
  \mathcal{G}_t\big] \leq \lambda^2 \sum_{k=1}^K \frac{b_{k,t}^2}
  {q_t(k)} + \big(1-\lambda^2\big) \leq \lambda^2 \sum_{k=1}^K
  \frac{1}{q_t(k)} + \big(1-\lambda^2\big).
\]

\begin{proof}
Write $X_t(k) := \mathbf{1}[k_t=k]\, O_t / \big(\bar w_k^t\,
q_t(k)\big)$, so ~\eqref{eq:loss-estimate} reads $\hat\ell_t(k)
= \lambda\, b_{k,t}\, X_t(k) + (1-\lambda) c_k$. Since $\mathbf{1}[k_t=k]$
and $O_t$ are $\{0,1\}$-valued, squaring leaves the numerator
unchanged, so
\[
  \mathbb{E}\big[X_t(k)^2 \mid \mathcal{G}_t\big] = \frac{1}{\bar
  w_k^t\, q_t(k)}, \qquad \mathbb{E}\big[X_t(k) \mid \mathcal{G}_t\big]
  = 1,
\]
the second identity following as in Claim 1's proof. Condition on $\mathcal{G}_t$, under which $b_{k,t}$, $c_k$, and
$\lambda$ are fixed. Expanding the square,
\[
  \mathbb{E}\big[\hat\ell_t(k)^2 \mid \mathcal{G}_t\big] = \lambda^2\,
  \frac{b_{k,t}^2}{\bar w_k^t\, q_t(k)} + 2\lambda(1-\lambda)\,
  b_{k,t}\, c_k + (1-\lambda)^2 c_k^2.
\]
Multiplying by $\bar w_k^t$ and summing over $k$, the factor $\bar
w_k^t$ cancels the $1/\bar w_k^t$ in the first term exactly, leaving
\[
  \sum_{k=1}^K \bar w_k^t\, \mathbb{E}\big[\hat\ell_t(k)^2 \mid
  \mathcal{G}_t\big] = \lambda^2 \sum_{k=1}^K \frac{b_{k,t}^2}{q_t(k)}
  + \sum_{k=1}^K \bar w_k^t \Big[2\lambda(1-\lambda)\, b_{k,t}\, c_k +
  (1-\lambda)^2 c_k^2\Big].
\]
Since $b_{k,t} \leq 1$ and $c_k \leq 1$, the bracketed quantity is at
most $2\lambda(1-\lambda) + (1-\lambda)^2 = (1-\lambda)(1+\lambda) =
1-\lambda^2$, and $\bar w_k^t$ averages to at most this same bound.
The second inequality in the claim follows from $b_{k,t} \leq 1$.
\end{proof}
Claim 1 gives $\mathbb{E}[\langle \bar w^t, \hat\ell_t\rangle \mid
\mathcal{G}_t] = \mathbb{E}[\ell_t(k_t) \mid \mathcal{G}_t]$ and
$\mathbb{E}[\hat\ell_t(k^\star) \mid \mathcal{G}_t] =
\ell_t(k^\star)$. Taking expectations of both and summing over $t$
turns the left-hand side of Claim 2 into $\mathbb{E}[R_T(k^\star)]$.

For the right-hand side, taking expectations and applying Claim 3,
then using $q_t(k) \geq q_{\min}$ for every $k, t$,
\[
  \mathbb{E}\left[\sum_{t=1}^T \sum_{k=1}^K \bar w_k^t\,
  \hat\ell_t(k)^2\right] \leq \sum_{t=1}^T \left(\lambda^2
  \sum_{k=1}^K \frac{1}{q_t(k)} + (1-\lambda^2)\right) \leq
  \lambda^2 \frac{KT}{q_{\min}} + (1-\lambda^2) T.
\]

Combining,
\[
  \mathbb{E}\big[R_T(k^\star)\big] \leq \frac{\log K}{\eta_H} +
  \frac{\eta_H}{2}\left(\lambda^2 \frac{KT}{q_{\min}} +
  (1-\lambda^2)T\right)
\]
for every $\eta_H > 0$. The right-hand side has the form $A/\eta_H +
\eta_H B/2$ with $A = \log K$ and $B = \lambda^2 KT/q_{\min} +
(1-\lambda^2)T$, minimized at $\eta_H^\star = \sqrt{2A/B}$ with value
$\sqrt{2AB}$, giving
\[
  \mathbb{E}\big[R_T(k^\star)\big] \leq \sqrt{2\log K
  \left(\lambda^2 \frac{KT}{q_{\min}} + (1-\lambda^2)T\right)} =
  O\!\left(\sqrt{T\log K}\cdot\sqrt{\frac{\lambda^2 K}{q_{\min}} +
  (1-\lambda^2)}\right),
\]
which is sublinear in $T$.
\section{ADDITIONAL EXPERIMENTS}
\label{sec:addexp}
\subsection{Empirical Validation of Sublinear Regret}
\label{sec:regret-empirical}
Theorem~\ref{thm:regret} establishes that OMVV's expected regret against
the best fixed verifier in hindsight, $\mathbb{E}[R_T(k^\star)]$, grows
sublinearly in $T$. To validate this empirically, we compute the
realized (non-expected) trajectory $R_t(k^\star) = \sum_{s=1}^{t}
\big(\ell_s(k_s) - \ell_s(k^\star)\big)$ on several datasets
(MATH, DeepMind Math \texttt{linear\_2d}, and TriviaQA), using the same $K{=}2$ verifier pool (\texttt{qwen2\_1.5b}, \texttt{qwen2.5\_3b}) as the main reported results. Here $k_s$ is the verifier OMVV actually selected at round $s$, $k^\star = \arg\min_k \sum_{t=1}^T \ell_t(k)$ is fixed once from the full-horizon totals, and $\ell_t(k)$ is computed from each verifier's own combiner-transformed score at every round.  We compute the
same quantity for the single-verifier SSV baseline, whose verifier choice is
fixed throughout the stream.

Figure~\ref{fig:regret-empirical} compares the cumulative regret trajectories of
OMVV and SSV across the three datasets. In all cases, OMVV's cumulative regret
rises steeply early in the stream and then flattens over the remainder, indicating
that OMVV's per-round loss has essentially matched that of the best fixed verifier.
This decelerating growth is consistent with the sublinear expected regret guarantee
in Theorem~\ref{thm:regret}. In contrast, the SSV baseline exhibits approximately
linear cumulative regret growth over the observed horizon, with no comparable
flattening. The comparison illustrates the cost of committing to a single verifier
when the better one for a given stream is not known in advance.
\begin{figure}[t]
\centering
\includegraphics[width=\textwidth]{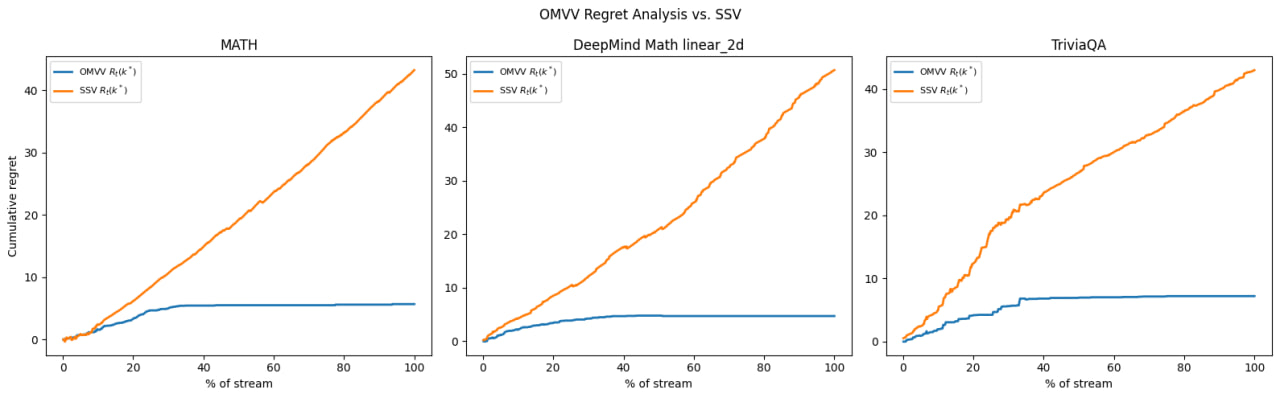}
\caption{Empirical cumulative regret $R_t(k^\star)$ of OMVV and the SSV baseline
relative to the best fixed verifier in hindsight. OMVV exhibits flattening
cumulative regret, consistent with the sublinear expected regret guarantee of
Theorem~\ref{thm:regret}. In contrast, SSV shows approximately linear regret
growth over the observed horizon.}
\label{fig:regret-empirical}
\end{figure}
\subsection{Block size Experiment: DeepMind Math (\texttt{linear\_2d})}
\label{sec:linear2d-minibatch}
We conduct an additional experiment on DeepMind Math
(\texttt{linear\_2d}) using the same protocol as
Table~\ref{tab:minibatch-math}, with the block size $B$ controlling how
frequently the stream alternates between easy and hard problems.
Table~\ref{tab:linear2d-minibatch} reports the results. Across all block sizes, OMVV achieves the highest accuracy, the lowest average error, and the lowest verification cost among the three methods. It also achieves the lowest $\operatorname{Err}_I$ at every block size. Neither single verifier baseline satisfies both error targets for any value of $B$, whereas OMVV satisfies both targets throughout. These results show that OMVV remains effective under repeated changes in problem difficulty on DeepMind Mathematics as well as MATH.
\begin{table}[t]
\centering
\caption{OMVV versus single-verifier baselines on the DeepMind Math
non-stationary stream, at $\alpha_I = \alpha_{II} = 0.2$, across block
sizes $B$. $\text{Err}_I$ and $\text{Err}_{II}$ values exceeding the
$0.2$ target are shown in red; the best value in each of Accuracy,
Avg($\text{Err}_I$, $\text{Err}_{II}$), Cost, and Oracle frequency, within
each block size, is bolded.}
\label{tab:linear2d-minibatch}
\begin{tabular}{clcccccc}
\toprule
$B$ & Method & Accuracy & Err$_I$ & Err$_{II}$ & Avg & Cost & Oracle freq \\
\midrule
\multirow{3}{*}{100}
& qwen2\_1.5b & 0.9333 & \textcolor{red}{0.2288} & \textcolor{red}{0.6786} & 0.4537 & 0.2903 & 0.1903 \\
& qwen2.5\_3b & 0.9267 & \textcolor{red}{0.8627} & \textbf{0.0314} & 0.4471 & 0.2833 & \textbf{0.0833} \\
& OMVV        & \textbf{0.9433} & \textbf{0.1405} & 0.1417 & \textbf{0.1411} & \textbf{0.2214} & 0.1153 \\
\midrule
\multirow{3}{*}{50}
& qwen2\_1.5b & \textbf{0.9433} & \textcolor{red}{0.2828} & \textcolor{red}{0.6419} & 0.4624 & 0.2836 & 0.1836 \\
& qwen2.5\_3b & 0.9267 & \textcolor{red}{0.8627} & \textbf{0.0365} & 0.4496 & 0.2813 & \textbf{0.0813} \\
& OMVV        & 0.9483 & \textbf{0.1186} & 0.1588 & \textbf{0.1387} & \textbf{0.2343} & 0.1242 \\
\midrule
\multirow{3}{*}{25}
& qwen2\_1.5b & 0.9367 & 0.1870 & \textcolor{red}{0.6791} & 0.4331 & 0.2909 & 0.1909 \\
& qwen2.5\_3b & 0.9233 & \textcolor{red}{0.8679} & \textbf{0.0399} & 0.4539 & 0.3017 & \textbf{0.1017} \\
& OMVV        & \textbf{0.9533} & \textbf{0.1667} & 0.1095 & \textbf{0.1381} & \textbf{0.2486} & 0.1393 \\
\midrule
\multirow{3}{*}{1}
& qwen2\_1.5b & 0.9317 & \textcolor{red}{0.3978} & \textcolor{red}{0.5075} & 0.4527 & 0.2852 & 0.1852 \\
& qwen2.5\_3b & 0.9267 & \textcolor{red}{0.8627} & \textbf{0.0348} & 0.4488 & 0.2847 & \textbf{0.0847} \\
& OMVV        & \textbf{0.9583} & \textbf{0.0991} & 0.1915 & \textbf{0.1453} & \textbf{0.2430} & 0.1384 \\
\bottomrule
\end{tabular}
\end{table}
\subsection{Evaluation at a Different Error-Rate Target}
\label{sec:alpha025-pooled}

We also evaluate OMVV under a different error target by setting
$\alpha_I=\alpha_{II}=0.25$, instead of $0.2$, across all four datasets.
Table~\ref{tab:alpha025-pooled} reports the results. OMVV achieves the
highest accuracy, the lowest verification cost, and the lowest
$\operatorname{Err}_I$ on every dataset. Moreover, OMVV satisfies both
error targets across all four datasets, whereas neither single verifier
baseline satisfies both targets on any dataset. These results indicate that
the empirical advantages of OMVV are not specific to the
$\alpha_I=\alpha_{II}=0.2$ setting used in the main experiments.

\begin{table}[t]
\centering
\caption{OMVV versus single-verifier baselines, pooled easy and hard
subsets, at $\alpha_I = \alpha_{II} = 0.25$. $\text{Err}_I$ and
$\text{Err}_{II}$ values exceeding the $0.25$ target are shown in
red; the best value in each of Accuracy, Avg($\text{Err}_I$,
$\text{Err}_{II}$), Cost, and Oracle frequency, within each dataset, is
bolded.}
\label{tab:alpha025-pooled}
\begin{tabular}{clcccccc}
\toprule
Dataset & Method & Accuracy & Err$_I$ & Err$_{II}$ & Avg & Cost & Oracle freq \\
\midrule
\multirow{3}{*}{MATH}
& qwen2\_1.5b & 0.6775 & \textcolor{red}{0.7845} & 0.0352 & 0.4099 & 0.3367 & 0.2367 \\
& qwen2.5\_3b & 0.7000 & \textcolor{red}{0.4928} & 0.0845 & 0.2887 & 0.3794 & \textbf{0.1794} \\
& OMVV        & \textbf{0.7175} & 0.2018 & 0.2290 & \textbf{0.2154} & \textbf{0.3346} & 0.2114 \\
\midrule
\multirow{3}{*}{zebra\_logic}
& qwen2\_1.5b & 0.7200 & \textcolor{red}{0.6519} & 0.0880 & 0.3700 & 0.3500 & 0.2500 \\
& qwen2.5\_3b & 0.6900 & \textcolor{red}{0.8810} & 0.0243 & 0.4527 & 0.3514 & \textbf{0.1514} \\
& OMVV        & \textbf{0.8300} & 0.2447 & 0.2277 & \textbf{0.2362} & \textbf{0.3430} & 0.2214 \\
\midrule
\multirow{3}{*}{TriviaQA}
& qwen2\_1.5b & 0.7925 & \textcolor{red}{0.4069} & 0.0199 & 0.2134 & 0.3001 & 0.2001 \\
& qwen2.5\_3b & 0.7950 & \textcolor{red}{0.3306} & 0.1897 & 0.2602 & 0.3435 & \textbf{0.1435} \\
& OMVV        & \textbf{0.8325} & 0.1698 & 0.1796 & \textbf{0.1747} & \textbf{0.2882} & 0.1755 \\
\midrule
\multirow{3}{*}{DeepMind Math}
& qwen2\_1.5b & 0.9250 & \textcolor{red}{0.2969} & \textcolor{red}{0.6573} & 0.4771 & 0.2704 & 0.1704 \\
& qwen2.5\_3b & 0.9267 & \textcolor{red}{0.7552} & 0.1195 & 0.4374 & 0.2919 & \textbf{0.0919} \\
& OMVV        & \textbf{0.9583} & 0.2488 & 0.0889 & \textbf{0.1689} & \textbf{0.2651} & 0.1473 \\
\bottomrule
\end{tabular}
\end{table}
\subsection{Verifier Implementation Details and Prompt Specifications}
\label{app:prompts}

Across all four datasets, we use a multi-model pipeline to generate, score,
and verify candidate solutions. \texttt{gpt-4o-mini} serves as the solution
generator, while Qwen2-1.5B-Instruct and Qwen2.5-3B-Instruct serve as the
two weak verifiers. Prompt wording varies across datasets: MATH and DeepMind
Math (\texttt{linear\_2d}) use the same weak verifier prompt, while Zebra
Logic and TriviaQA use prompts tailored to their respective tasks. For MATH
and TriviaQA, correctness is judged by \texttt{gpt-4o} acting as the oracle.
For Zebra Logic and DeepMind Math, correctness is determined by a
deterministic programmatic grader, since both tasks admit an exact check
against the ground truth. We provide representative prompts below for the
three roles: generator, weak verifier, and oracle.
\subsubsection{Generator}
\begin{promptbox}{Generator Prompt --- Math (GPT-4o-mini)}

Solve this math problem step by step. Show your reasoning, then give the final answer.

\medskip
\textbf{PROBLEM:}\\
\{problem\}

\medskip
\textbf{Respond with JSON:}

\medskip
\texttt{\{}\\
\hspace*{1em}\texttt{"reasoning": "Your step-by-step reasoning here...",}\\
\hspace*{1em}\texttt{"final\_answer": "Your final answer (just the answer,}\\
\hspace*{1em}\texttt{e.g., '42' or '3/4' or 'x=5')"}\\
\texttt{\}}

\end{promptbox}
\begin{promptbox}{Generator Prompt --- Zebra Logic (GPT-4o-mini)}

Solve this logic grid puzzle. Each category has one value per column
(entity); use the clues to deduce the unique correct arrangement.

\medskip
\textbf{PUZZLE:}\\
\{question\}

\medskip
\textbf{Respond with JSON:}

\medskip
\texttt{\{}\\
\hspace*{1em}\texttt{"reasoning": "Brief step-by-step deduction...",}\\
\hspace*{1em}\texttt{"answer\_table": "The completed table as plain text,}\\
\hspace*{1em}\texttt{one line per category, in the exact format:}\\
\hspace*{1em}\texttt{| CategoryName | value1 | value2 | ... |}\\
\hspace*{1em}\texttt{(same column order as the puzzle's blank table, left to right)"}\\
\texttt{\}}
\end{promptbox}

\subsubsection{Weak Verifier}

\begin{promptbox}{Weak Verifier Prompt --- MATH}

Rate the correctness of this math solution on a scale from 0.0 to 1.0.

\medskip
\textbf{PROBLEM:}\\
\{problem\}

\medskip
\textbf{REASONING:}\\
\{reasoning\}

\medskip
\textbf{FINAL ANSWER:}\\
\{final\_answer\}

\medskip
\textbf{Consider:}

\begin{enumerate}
    \item Is the reasoning mathematically sound?
    \item Are all calculations correct?
    \item Does the final answer follow from the reasoning?
\end{enumerate}

\textbf{Respond with ONLY a JSON object, nothing else:}

\medskip
\texttt{\{"score": 0.XX\}}

\end{promptbox}

This prompt is reused verbatim for DeepMind Math.

\begin{promptbox}{Weak Verifier Prompt TriviaQA}
Rate the correctness of this trivia answer on a scale from 0.0 to 1.0.

\medskip
\textbf{QUESTION:}\\
\{question\}

\medskip
\textbf{REASONING:}\\
\{reasoning\}

\medskip
\textbf{ANSWER:}\\
\{final\_answer\}

\medskip
\textbf{Consider:}

\begin{enumerate}
    \item Does the reasoning show genuine knowledge of the relevant facts, or is it vague/guessing?
    \item Is the answer specific and directly responsive to the question?
    \item Does the final answer follow logically from the reasoning?
\end{enumerate}

\textbf{Respond with ONLY a JSON object, nothing else:}

\medskip
\texttt{\{"score": 0.XX\}}
\end{promptbox}

\subsubsection{Oracle}

\begin{promptbox}{Oracle Prompt --- MATH (GPT-4o)}
Determine if two math answers are equivalent.

\medskip
\textbf{PROBLEM:}\\
\{problem\}

\medskip
\textbf{STUDENT'S REASONING:}\\
\{reasoning\}

\medskip
\textbf{STUDENT'S ANSWER:}\\
\{generated\_answer\}

\medskip
\textbf{CORRECT ANSWER:}\\
\{ground\_truth\_answer\}

\medskip
\textbf{Are these answers equivalent? Consider:}

\begin{itemize}
    \item Different formats: "5", "5.0", "5/1", "\$5\$", "\textbackslash boxed\{5\}" are all equivalent
    \item Simplified vs unsimplified: "2/4" = "1/2"
    \item Different notation: "x=3" vs "3" (if asking for x)
\end{itemize}

\textbf{Respond with JSON:}

\medskip
\texttt{\{"is\_correct": true/false,}\\
\hspace*{1em}\texttt{"explanation": "brief reason"\}}
\end{promptbox}

\end{document}